\normalfont\fontsize{14.3}{15}\bfseries}{\thesection}{1em}{}
\DeclareMathAlphabet\mathbfcal{OMS}{cmsy}{b}{n}
\newcommand{\BEAS}{\begin{eqnarray*}}
\newcommand{\EEAS}{\end{eqnarray*}}
\newcommand{\BEA}{\begin{eqnarray}}
\newcommand{\EEA}{\end{eqnarray}}
\newcommand{\BEQ}{\begin{equation}}
\newcommand{\EEQ}{\end{equation}}
\newcommand{\BIT}{\begin{itemize}}
\newcommand{\EIT}{\end{itemize}}
\newcommand{\BNUM}{\begin{enumerate}}
\newcommand{\ENUM}{\end{enumerate}}
\newcommand{\BA}{\begin{array}}
\newcommand{\EA}{\end{array}}
\newcommand{\rb}{\R}
\newcommand{\Z}{\zb}
\newtheorem{theorem}{Theorem}
\newtheorem{definition}[theorem]{Definition}
\newtheorem{lemma}[theorem]{Lemma}
\newtheorem{corollary}[theorem]{Corollary}
\def \E{{\mathbb E}}
\def \Z{{\mathbb Z}}
\def \P{{\mathbb P}}
\def \X{{\mathcal X}}
 \def \P{{\mathbb P}}
\newcommand{\C}{{\mathbb{C}}}
\newcommand{\R}{{\mathbb{R}}}
\newcommand{\N}{{\mathbb{N}}}
\DeclarePairedDelimiter{\abs}{\lvert}{\rvert} %
\DeclarePairedDelimiter{\brk}{[}{]}
\DeclarePairedDelimiter{\crl}{\{}{\}}
\DeclarePairedDelimiter{\prn}{(}{)}
\DeclarePairedDelimiter{\nrm}{\|}{\|}
\newcommand{\mc}[1]{\mathcal{#1}}
\newcommand{\inner}[2]{\left\langle #1,\, #2 \right\rangle}
\newcommand{\indicator}[1]{\mathbbm{1}_{\crl{#1}}}
\newcommand{\upk}[0]{^{(k)}}
\newcommand{\collapse}[1]{$\dots$}
\newcommand{\eps}{{\varepsilon}}
\title{\vspace{-9mm}\rule{\linewidth}{2pt}\\ \textbf{Non-Convex Optimization with Certificates and Fast \\ Rates Through Kernel Sums of Squares} \\\rule[8pt]{\linewidth}{1pt}}
\author{\normalsize\begin{minipage}{\textwidth}
\textbf{Blake Woodworth}\hfill\url{blakewoodworth@gmail.com} \\
\textbf{Francis Bach} \hfill \url{francis.bach@inria.fr}\\
\textbf{Alessandro Rudi} \hfill \url{alessandro.rudi@inria.fr}\\
\emph{Inria - Ecole Normale Supérieure}\\ \emph{PSL Research University} \\
\end{minipage}\vspace{-4mm}
}
\date{}
\begin{document}
\maketitle
\begin{abstract}
\noindent
We consider potentially non-convex optimization problems, for which optimal rates of approximation depend on the dimension of the parameter space and the smoothness of the function to be optimized. In this paper, we propose an algorithm that achieves close to optimal a priori computational guarantees, while also providing a posteriori certificates of optimality. Our general formulation builds on infinite-dimensional sums-of-squares and Fourier analysis, and is instantiated on the minimization of multivariate periodic functions.
\end{abstract}

\section{Introduction}
A well-designed optimization algorithm provides two important types of guarantees. First, it guarantees {\em a priori} that its output will achieve a certain degree of accuracy, with computational complexity that is hopefully adaptive to the specific properties of function to be optimized and possibly even optimal over a certain class of algorithms or functions to minimize. Second, it provides an {\em a posteriori} certificate, i.e., an explicit bound on the solution's accuracy that we can calculate once we run the algorithm. There are many examples of such well-designed optimization algorithms in the convex setting, which often use some form of convex duality \citep[see, e.g.,][]{nemirovski2010accuracy}.

In this paper, our goal is to provide a well-designed algorithm for \emph{non-convex} optimization,
\begin{equation}\label{eq:problem-base}
c_* = \inf_{x \in \mc{X}} f(x),
\end{equation}
with $\mc{X} \subseteq \R^d$ and $f$ potentially non-convex. In general, this task is extremely difficult, and in the worst case the computational cost must be exponential in the dimension $d$. However, it is known \citep{novak2006deterministic} that in order for an algorithm to achieve the optimal computational complexity in solving \cref{eq:problem-base}, it must be adaptive to the degree of differentiability of $f$. That is, it should be able to overcome the curse of dimensionality in terms of the approximation error $\eps$, when the function is very smooth. More precisely, if $f$ is $m$-times differentiable, then the computational complexity for finding $c_*$ with error $\eps$ should be $C_d \, \eps^{-d/m}$, where $C_d$ is exponential in $d$ in the worst case, but the dependence on the accuracy scales as only $\eps^{-d/m}$, which becomes quite mild once $m$ approaches $d$. In this case, the curse of dimensionality is relegated just to the constant $C_d$, making it possible to efficiently solve non-convex problems to high accuracy as long as $d$ is relatively small, which has applications to tasks like hyperparameter tuning, industrial process optimization, and more.

Establishing well-designed algorithms for non-convex optimization is a difficult task. Many non-convex optimization algorithms in the literature lack a priori guarantees, a posteriori guarantees, or actually any guarantees at all, and many methods used in practice are based on heuristics or can only guarantee convergence to a {\em local} rather than global minimum. Some other algorithms have good a posteriori guarantees, but weak a priori bounds; for example, methods based on polynomial sum of squares \citep{lasserre2001global,parrilo2003semidefinite} are not adaptive, a priori, to the smoothness and are therefore subject to the curse of dimensionality in terms of the accuracy $\eps$. The new family of algorithms based on kernel sum of squares~\citep{rudi2020finding} achieves quasi-optimal a priori guarantees, but without any certificate a posteriori. 

\paragraph{Our Contribution.} In this paper, we provide a general strategy to derive algorithms that compute a {\em lower bound $\hat{c}$} of $c_*$ with strong guarantees both a priori and a posteriori (see Corollary \ref{cor:all-together}). As a particular example, we consider optimizing smooth, periodic functions $f$ on $\mc{X} = [0,1]^d$ and derive an algorithm  that: ({\em a priori}) approximates $c_*$ with almost optimal error $\eps$ and computational complexity that scales well with $\varepsilon$, and ({\em a posteriori}) provide a certificate of accuracy, which is adaptive to the specific instance of the problem.

The a priori guarantee is useful since it shows that the proposed algorithm has nearly optimal complexity, which is adaptive to the smoothness of the function to be optimized.
The a posteriori certificate is particularly useful because the accuracy of our estimate of $c_*$ depends on the specific instance at hand, and may be much better than the worst-case, exponential-in-$d$ constant would suggest. Indeed, better-than-worst-case performance is frequently observed in practice, but we need a certificate of accuracy in order to \emph{know} when we are so lucky.




\section{Deriving Well-Designed Algorithms for Smooth Non-Convex Optimization}\label{sec:deriving-well-designed}

Our approach begins with rewriting the problem \eqref{eq:problem-base} as finding the highest lower bound on $f$:
\[
c_* = \max_{c \in \R}  ~ c  ~~~ \textrm{ such that } ~~~ f(x) \geq c ~~ \forall x \in \mc{X}.
\]
The inequality constraint $f - c \geq 0$ can be converted to an equality constraint by introducing a non-negative function $g \geq 0$:
\[
c_* = \max_{c \in \R, \ g \geq 0}  ~ c  ~~~ \textrm{ such that } ~~~ f(x) - c - g(x) = 0 ~~ \forall x \in \mc{X}.
\]
Finally, we can rewrite this again in a penalized form
\begin{equation}\label{eq:prob-Linfty}
\tilde{c} ~~=~~ \max_{c \in \R, \ g \geq 0} ~c~ - ~\|f - c - g\|_{L^\infty(\mc{X})}.
\end{equation}
It may not yet be obvious what this accomplishes, but the following Lemma indicates its promise:
\begin{lemma}
The problem \eqref{eq:prob-Linfty} is a concave maximization problem with solution $\tilde{c} = c_*$, and for any feasible $(c,g)$ such that $c \in \R$ and $g \geq 0$, $c_\ast$ is lower-bounded by $c- \|f - c - g\|_{L^\infty(\mc{X})}$. 
\end{lemma}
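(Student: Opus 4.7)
The plan is to verify three claims separately: concavity, the lower-bound property, and the equality $\tilde{c}=c_*$.

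First I would argue concavity directly from the structure of the objective. The map $(c,g)\mapsto f-c-g$ is affine, so $(c,g)\mapsto \|f-c-g\|_{L^\infty(\mc{X})}$ is a convex function (composition of a norm with an affine map). Hence $(c,g)\mapsto c-\|f-c-g\|_{L^\infty(\mc{X})}$ is concave. The feasible set $\{(c,g):c\in\R,\ g\geq 0\}$ is convex (the cone of nonnegative functions is convex), so \eqref{eq:prob-Linfty} is indeed a concave maximization.

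Next I would establish the lower-bound inequality, which is essentially the whole content of the lemma. Fix any feasible pair $(c,g)$ and set $\delta=\|f-c-g\|_{L^\infty(\mc{X})}$. By definition of the sup-norm, $f(x)-c-g(x)\geq -\delta$ for all $x\in\mc{X}$, i.e.\ $f(x)\geq c+g(x)-\delta$. Since $g(x)\geq 0$, this yields $f(x)\geq c-\delta$ for every $x$, so taking the infimum gives $c_*\geq c-\delta = c-\|f-c-g\|_{L^\infty(\mc{X})}$. Taking the supremum over feasible $(c,g)$ yields $\tilde{c}\leq c_*$.

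Finally I would show that this upper bound on $\tilde{c}$ is attained. The natural candidate is $(c,g)=(c_*,\ f-c_*)$: by the definition $c_*=\inf_x f(x)$, the function $g=f-c_*$ is nonnegative (hence feasible), and with this choice $f-c-g\equiv 0$, so the objective equals $c_*$. Thus $\tilde{c}\geq c_*$, and combined with the previous step we conclude $\tilde{c}=c_*$.

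The only potential obstacle is if $c_*=-\infty$, in which case $g=f-c_*$ is ill-defined; but then the lower bound $c_*\geq c-\|f-c-g\|_{L^\infty}$ is trivial and the claim $\tilde{c}=c_*$ follows by taking $c\to-\infty$ with any fixed $g\geq 0$. Otherwise the argument is entirely elementary, with no technical difficulty beyond correctly handling the $L^\infty$ norm as a convex function of its argument.
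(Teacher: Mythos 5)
Your proof is correct and follows essentially the same route as the paper: concavity via a linear term minus a norm composed with an affine map, the lower bound from the pointwise inequality $f(x)\geq c+g(x)-\|f-c-g\|_{L^\infty(\mc{X})}$ together with $g\geq 0$, and equality via the feasible pair $(c_*,\,f-c_*)$. Your extra remark on the case $c_*=-\infty$ is a harmless refinement the paper leaves implicit.
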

\begin{proof}
The objective $V(c,g) := c - \|f - c - g\|_{L^\infty(\mc{X})}$ is concave because it is a linear term $c$ minus a convex term $\|\cdot\|_{L^\infty(\mc{X})}$ composed with a linear function of the optimization variables.

Since for all $x \in \mc{X}$, $f(x) - c - g(x) \geq -
 \|f - c - g\|_{L^\infty(\mc{X})}$, for any feasible $(c,g)$, then $\forall x \in \mc{X}$, $f(x) \geq V(c,g)$. Thus, by minimizing with respect to $x$,  $c_\ast \geq \tilde{c}$. In order to show the other inequality, we notice that  $(c_*,f-c_*)$ is feasible and $V(c_*,f-c_*) = c_*$. 
\end{proof}
So, we have reduced the non-convex minimization problem \eqref{eq:problem-base} to a concave maximization problem~\eqref{eq:prob-Linfty}, which is an improvement. However, whereas the original problem had a $d$-dimensional optimization variable, our new problem requires optimizing over the infinite-dimensional space of non-negative functions, and it is not clear how to do this. Furthermore, the quantity $\nrm{f - c - g}_{L^\infty(\mc{X})}$ is just as difficult to compute as $\inf_{x\in\mc{X}}f(x)$ would be. We will now describe several modifications leading to a more tractable optimization problem that maintains the same desirable properties.

\paragraph{A more tractable formulation.}
Specifically, our approach revolves around introducing a more tractable norm $\nrm{\cdot}_W$ on functions from $\mc{X}$ to $\R$, and restricting $g \in \mc{G}$, for $\mc{G}$ a tractable subset of all non-negative functions. We will specify $W$ and $\mc{G}$ later. To provide guarantees on the method we only need them to satisfy the following:


\begin{definition}[Norms and models for non-negative functions]\label{def:norms-and-models}

\begin{enumerate}
    \item Let $\|\cdot\|_{W}$ be a norm on the space of real-valued functions over $\mc{X}$, such that $\|\cdot\|_{L^\infty(\mc{X})} \leq \|\cdot\|_{W}$. Denote by ${\cal W}$ the associated Banach space. 
    \item Let ${\cal G}$ be a convex subset of the set of non-negative functions, such that $\cal G$ is a closed subset of~$\cal W$.  
\end{enumerate}
\end{definition}
Now, by restricting the problem in \cref{eq:prob-Linfty} on ${\cal G}$ and considering the norm $\|\cdot\|_{W}$, we obtain what can be a more tractable formulation,
\begin{equation}\label{eq:problem-relax}
\bar{c} = \max_{c \in \R, \ g \in {\cal G}} \ \ c - \|f - c - g\|_{W}. 
\end{equation}
It is, of course, not the case that \emph{any} choice of $\nrm{\cdot}_W$ and $\mc{G}$ would make \cref{eq:problem-relax} easy to solve. However, we will later discuss examples where \cref{eq:problem-relax} is much easier to solve than \cref{eq:prob-Linfty}.
Regardless, we will see in the next Theorem, that this formulation comes with strong guarantees, expressing the error of the algorithm directly in terms of the approximation properties of the class of models~$\mc{G}$ for non-negative functions.
\begin{theorem}[Tightness of \cref{eq:problem-relax}]\label{thm:tight-problem}
Suppose that $f - c \in {\cal W}$ for any $c \in \R$, then 
$$c_* - q \leq \bar{c} \leq c_*, \qquad q = \min_{g \in {\cal G}} \|f - c_* - g\|_W.$$
Moreover, if there exists $g_* \in {\cal G}$ satisfying $\|f - c_* - g_*\|_W = 0$, then $\bar{c} = c_*$. Finally, any $c \in \R$ and $g \in {\cal G}$ leads to a lower bound $c - \|f - c - g\|_{W}$ on $c_\ast$.
\end{theorem}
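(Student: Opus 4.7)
The plan is to mirror the argument used for the earlier lemma, with the norm $\|\cdot\|_W$ replacing $\|\cdot\|_{L^\infty(\mc{X})}$ and using the containment $\mc{G}\subseteq\mc{W}$ together with non-negativity of elements in $\mc{G}$.

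First I would establish the upper bound $\bar{c}\le c_*$, which simultaneously proves the ``lower bound on $c_*$'' claim (part three of the statement). Fix any feasible pair $(c,g)$ with $g\in\mc{G}$. Since $\|\cdot\|_{L^\infty(\mc{X})}\le\|\cdot\|_W$ by \cref{def:norms-and-models}, for every $x\in\mc{X}$ we have
\[
f(x)-c-g(x)\ \ge\ -\|f-c-g\|_{L^\infty(\mc{X})}\ \ge\ -\|f-c-g\|_W,
\]
and because $g(x)\ge 0$ this rearranges to $f(x)\ge c-\|f-c-g\|_W$. Taking the infimum over $x$ gives $c_*\ge c-\|f-c-g\|_W$. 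This shows both that any feasible pair produces a valid lower bound on $c_*$ and, by taking the supremum over $(c,g)$, that $\bar{c}\le c_*$.

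Next I would prove the lower bound $\bar{c}\ge c_*-q$. Let $g\in\mc{G}$ be a minimizer (or, if necessary, a near-minimizer) of $\|f-c_*-g\|_W$ over $\mc{G}$; the assumption $f-c\in\mc{W}$ for every $c\in\R$ together with $\mc{G}\subseteq\mc{W}$ ensures this norm is finite, so $q$ is well defined. The pair $(c_*,g)$ is feasible for \cref{eq:problem-relax}, and its objective value is exactly $c_*-\|f-c_*-g\|_W=c_*-q$. Therefore $\bar{c}\ge c_*-q$, completing the two-sided bound. If $\mc{G}$ is not known to attain its infimum one passes to a sequence $g_n$ with $\|f-c_*-g_n\|_W\to q$ and concludes by taking limits.

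Finally, the second claim is immediate: if some $g_*\in\mc{G}$ satisfies $\|f-c_*-g_*\|_W=0$, then $q=0$, and the two-sided bound forces $\bar{c}=c_*$. There is no real obstacle here beyond bookkeeping: the key structural facts — that $\|\cdot\|_W$ dominates $\|\cdot\|_{L^\infty(\mc{X})}$, that elements of $\mc{G}$ are non-negative and lie in $\mc{W}$, and that $(c_*,g)$ is admissible whenever $g\in\mc{G}$ — do all of the work, and attainment of $q$ is the only mildly delicate point.
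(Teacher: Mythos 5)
Your proposal is correct and follows essentially the same route as the paper: the upper bound comes from $\|\cdot\|_{L^\infty(\mc{X})}\le\|\cdot\|_W$ together with non-negativity of $g\in\mc{G}$ (the paper simply cites the earlier $L^\infty$ lemma where you rederive it inline), and the lower bound comes from fixing $c=c_*$ and optimizing over $g\in\mc{G}$, with $q=0$ giving $\bar{c}=c_*$. Your remark about near-minimizers is a reasonable substitute for the paper's appeal to closedness of $\mc{G}$ in $\mc{W}$ to justify writing the minimum.
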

\begin{proof}
By construction $\bar{c} \leq c_*$, since $c - \|f-c-g\|_{W} \leq c - \|f-c-g\|_{L^\infty(\mc{X})} \leq c_*$ and ${\cal G} \subseteq \{g~|~g \geq 0\}$.
Now the problem is well defined, since it corresponds to a maximization on the closed subset $\R \times {\cal G}$ of a concave and continuous objective function (for the topology inherited from~$\mc{W}$).
By setting $c = c_*$ and optimizing over $g$, the optimized objective is exactly $c_* - q$ with $q$ as above, thus $c_\ast - q \leq \bar{c}$. Moreover,we have $q=0$, i.e., $\bar{c} = c_*$ when there exists $g_* \in {\cal G}$ satisfying $\|f - c_* - g_*\|_W = 0$.
\end{proof} 
In the Theorem above we see that $\bar{c}$ is lower bound of $c_*$ by construction. Moreover, the error $c_* - \bar{c}$ is bounded by $q$, the {\em approximation error} of $f - c_*$ with respect to the class of models for non-negative functions $\mc{G}$ that we consider, measured in the norm $W$. So far, this all holds for any $\nrm{\cdot}_W$ and $\mc{G}$ satisfying Definition \ref{def:norms-and-models}; we continue by analyzing a specific choice.

\paragraph{The model for non-negative functions.}
We would like to use a class of models $\cal G$ that can approximate smooth non-negative functions using as few parameters as possible, while remaining tractable to optimize over. We consider the class of {\em PSD models} introduced by \citet{marteau2020non} and defined as
$$g(x) = \phi(x)^\ast A \phi(x), \quad A \succeq 0,$$
for a suitable map $\phi:\X \to \C^n$ and $A \in \C^{n \times n}$ Hermitian positive semidefinite\footnote{We need complex numbers because of Fourier analysis, but this extends to symmetric real matrices.}, where $\phi^\ast$ denotes the conjugate transpose of $\phi$. By the definition of positive semidefiniteness, $g(x) \geq 0$ for any $x \in \mc{X}$, and this is also a tractable class to optimize over since $g$ is linear in the parameters $A$. The approximation properties of the model will depend on the choice of the feature map $\phi$, and have been shown to give rates in the order of $n^{-m/d}$ for specific choices \citep{rudi2021psd}. Here, however, we need to study the approximation error with respect to our norm of choice $W$, and we will see that different feature maps than the ones considered by \citet{rudi2020finding} will lead to better rates.

\paragraph{The norms.}
We need norms that bound $\|\cdot\|_{L^\infty(\mc{X})}$ from above as tightly as possible, but are also easy to compute. We consider from this viewpoint two norms:
\begin{enumerate}
    \item The ``$F$ norm'': $L^1$ norm of the Fourier transform, i.e., 
    $$\|u\|_{F} = \int_{\R^d} |\hat{u}(\omega)| d\omega,$$
    \item The ``$S$ norm'': The norm associated to a richer reproducing kernel Hilbert space such as the Sobolev space of exponent $(d+1)/2$. Let $S(\omega)$ non negative and integrable, we define
    $$\|u\|^2_{S} = C^2_S \int_{\R^d}  \frac{|\hat{u}(\omega)|^2}{S(\omega)} d\omega,$$
    and $C^2_S = \int_{\R^d}  S(\omega) d\omega$. For example for the Sobolev case we set $S(\omega) = (1+\|\omega\|^2)^{(d+1)/2}$.
\end{enumerate}

\begin{lemma}\label{lem:F-weaker-than-S}
The norms above satisfy $\|\cdot\|_{L^\infty(\R^d)} \leq \|\cdot\|_F \leq \|\cdot\|_S$, for any non-negative and integrable $S$. Moreover,
\[
\|u\|_F = \min_{S \geq 0,\ S \in L^1(\R^d)} \|u\|_S
\]
\end{lemma}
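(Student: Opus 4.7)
The plan is to handle the two chain inequalities with Fourier inversion and Cauchy--Schwarz, respectively, and then realize the infimum by identifying the equality case.

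For $\|\cdot\|_{L^\infty(\R^d)} \leq \|\cdot\|_F$, I would start from Fourier inversion, $u(x) = \int \hat u(\omega) e^{2\pi i \omega \cdot x}\, d\omega$, and push the absolute value inside the integral to obtain $|u(x)| \leq \int |\hat u(\omega)|\, d\omega = \|u\|_F$ for every $x$; taking the supremum in $x$ gives the $L^\infty$ bound.

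For $\|u\|_F \leq \|u\|_S$, I would factor $|\hat u(\omega)| = (|\hat u(\omega)|/\sqrt{S(\omega)}) \cdot \sqrt{S(\omega)}$ and apply Cauchy--Schwarz:
\[
\|u\|_F^2 \;=\; \left(\int |\hat u|\, d\omega\right)^{\!2} \;\leq\; \left(\int \frac{|\hat u|^2}{S}\, d\omega\right)\!\left(\int S\, d\omega\right) \;=\; C_S^{-2}\|u\|_S^2 \cdot C_S^2 \;=\; \|u\|_S^2.
\]
For the identity $\|u\|_F = \min_S \|u\|_S$, the $\leq$ direction follows from the previous inequality. For the reverse, I would examine when Cauchy--Schwarz is tight, which requires $|\hat u|/\sqrt S$ proportional to $\sqrt S$, i.e.\ $S \propto |\hat u|$. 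Choosing $S^\ast(\omega) = |\hat u(\omega)|$ gives $C_{S^\ast}^2 = \int |\hat u|\, d\omega = \|u\|_F$ and $\int |\hat u|^2/S^\ast\, d\omega = \int |\hat u|\, d\omega = \|u\|_F$, so $\|u\|_{S^\ast}^2 = \|u\|_F^2$, attaining the lower bound.

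The main obstacle is the technicality that $S^\ast$ vanishes wherever $\hat u$ does, so that the integrand $|\hat u|^2/S^\ast$ is formally $0/0$ there and $S^\ast$ may fail to satisfy whatever strict positivity is implicit in treating $\|\cdot\|_S$ as coming from a genuine RKHS. I would resolve this by regularizing, replacing $S^\ast$ with $S_\varepsilon(\omega) = |\hat u(\omega)| + \varepsilon\, \rho(\omega)$ for a fixed non-negative integrable $\rho$ (say a Gaussian). Since $|\hat u|^2/S_\varepsilon \leq |\hat u|$ pointwise, dominated convergence yields $\int |\hat u|^2/S_\varepsilon\, d\omega \to \|u\|_F$, while $C_{S_\varepsilon}^2 \to \|u\|_F$ by monotone convergence; hence $\|u\|_{S_\varepsilon} \to \|u\|_F$, which is enough to conclude that the infimum is $\|u\|_F$ (and is attained by $S^\ast$ if one adopts the natural convention $0/0 := 0$ on the null set of $\hat u$).
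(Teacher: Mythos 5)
Your proof is correct and follows essentially the same route as the paper: Fourier inversion for the $L^\infty$ bound, Cauchy--Schwarz with the factorization $|\hat u| = (|\hat u|/\sqrt S)\sqrt S$ for $\|u\|_F \leq \|u\|_S$, and the choice $S = |\hat u|$ to attain the minimum. Your extra regularization $S_\varepsilon = |\hat u| + \varepsilon\rho$ merely tidies up the $0/0$ issue on the zero set of $\hat u$, which the paper passes over silently.
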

\begin{proof}
First, for any $u$ with finite integrable Fourier transform, we have, by H\"older inequality, 
\[
\abs{u(x)} 
= \abs*{\int \hat{u}(\omega) e^{2\pi i \inner{x}{\omega}}d\omega}
\leq \nrm{\hat{u}}_{L^1(\R^d)}\nrm{e^{2\pi i \inner{x}{\cdot}}}_{L^\infty(\R^d)}
\leq \nrm{\hat{u}}_{L^1(\R^d)}
=: \nrm{u}_F, \qquad \forall x \in \R^d
\]
from which we conclude that $\nrm{\cdot}_F$ always upper bounds $\nrm{\cdot}_{L^\infty(\R^d)}$. Analogously, note that, for any $S \geq 0$ and $S \in L^1(\R^d)$, and for any $u$ such that $\|u\|_S$ is finite, we have, by Cauchy-Schwartz,
$$
\|u\|^2_F = \|\widehat{u}\|^2_{L^1(\R^d)} = \big\|S^{1/2} \, \frac{\widehat{u}}{S^{1/2}}\big\|^2_{L^1(\R^d)} \leq \|S^{1/2}\|^2_{L^2(\R^d)} \big\|\frac{\widehat{u}}{S^{1/2}}\big\|^2_{L^2(\R^d)} = C^2_S \int \frac{|\hat{u}(\omega)|^2}{S(\omega)} = \|u\|^2_S. 
$$
Finally, for $u$ with finite $F$-norm, $\|u\|_F = \|u\|_S$ when $S = |\widehat{u}|$ for which $S \geq 0, S \in L^1(\R^d)$.
\end{proof}

The $F$ norm and the $S$ norm are of interest when we can assume that we have access to the Fourier transform of the target function $f$. In particular the norm $S$ can be also computed in closed form in specific scenarios. For example, consider the case when $f$ is of the form 
$$f = \sum_{j=1}^M \beta_j h(x-x_i),$$
for some $\beta_1,\dots,\beta_M \in \R$ and $x_1,\dots, x_M \in \R^d$. This arises, e.g., for mixtures of Gaussians, and learning linear models or RBF networks. If we know the Fourier transform of $h$, then
$$ \|f\|_{S}^2 := \sum_{i,j=1}^M \beta_i \beta_j H(x_i - x_j),$$
where $H$ is the inverse Fourier transform of the function $S(\omega)|\hat{h}(\omega)|^2$.

Perhaps more interestingly, Lemma \ref{lem:F-weaker-than-S} shows that the $F$ norm is weaker than the norm $S$ for any $S$, meaning that the $F$ norm allows us to automatically adapt to certain structures in the function. For example, suppose that $f(x) = h(Px)$ for some unknown $P\in\R^{d'\times d}$ with $d' \ll d$ and $h:\R^{d'}\to\R$. Using the $S$ norm for a certain $S$ that depends on $P$ would allow us to adapt to the low-dimensional structure and depend on $d'$ rather than $d$, but this requires knowing $P$. On the other hand, the $F$ norm is always weaker, so we can take advantage of the low-dimensional structure automatically.


\subsection{PSD Models for Periodic Functions on $[0,1]^d$ and Their Approximation Properties}\label{subsec:psd-models-for-periodic-functions}
The goal of the section is to provide a self-contained introduction to the approximation properties of PSD models. In particular, we consider the problem of approximating smooth $1$-periodic functions (which corresponds to $\mc{X}=[0,1]^d$) using PSD models where $\phi$ is a subset of the Fourier basis. This setting, while already being of interest for practical applications, allows for an elementary proof which highlights the main conceptual steps of the derivation. 

The main results of this section are \cref{thm:bound-psd-model} and, in particular, \cref{thm:approximation-error}. With a more refined proof based on the same strategy, it is also possible to obtain results that hold for other scenarios beyond periodic functions on the torus and for more general maps $\phi$. See, for example, \citet{rudi2021psd} for the approximation of non-periodic $C^m$ functions on subsets of $\R^d$ via PSD models based on a finite dimensional feature map defined with respect to the Gaussian kernel, or \citet{rudi2020finding} for a feature map defined with respect to any kernel that satisfy some algebraic property, such as the Sobolev kernel.

We have seen in \cref{thm:tight-problem} that the optimization error of \cref{eq:problem-relax} depends on the approximation error of the function $f$ with respect to the class of models for non-negative functions. So, in our analysis there will be three main ingredients: a class of models ${\cal G}_t$ parametrized by its bandwidth $t$, which will depend on the space of functions associated with a feature map $\phi_t$; the space of functions where $f$ lives, which we denote $H_\rho$; and the norm that we use to measure the approximation error, in our case, $\|\cdot\|_F$. 

We start by introducing ${\cal G}_t$, parametrized by a bandwidth $t \in \N$. We associate each entry in $\phi_t(x)$ with an element of $\crl{k\in\mathbb{Z}^d:\abs{k}\leq t}$ where $|k| = \sum_{j} \abs{k_j}$, with $n = \#( \crl{k\in\mathbb{Z}^d:\abs{k}\leq t}) \leq (2 t + 1)^d$, i.e., $n = O(t^d)$. So for each $\abs{k}\leq t$, we define the feature map $\phi_t: X \to \C^n$ elementwise as $(\phi_t(x))_k = e_k(x)$, where $e_k$ is the $k$-th Fourier component, i.e., $e_k(x) = e^{-2\pi i k^\top x}$.
Consider the class of PSD models
\begin{equation}\label{eq:our-psd-models}
{\cal G}_t = \{ g_{A,t} ~|~ A \in \C^{n \times n}, A \succeq 0\}, \quad g_{A,t}(x) = \phi_t(x)^\ast A \phi_t(x).
\end{equation}
We are thus considering the feature map $\phi_t$ associated with the classical band-limited space of functions. This choice is convenient for our analysis, but there are also many other choices of finite dimensional feature maps for PSD models that can have good approximation properties \citep{rudi2021psd,rudi2020finding}.

We consider continuous, 1-periodic functions $f$ on $\R^d$, i.e., functions satisfying $f(x + k) = f(x)$ for any $x \in \R^d$ and $k \in \Z^d$. We note that these can therefore be identified with continuous, periodic functions on the torus $[0,1]^d$. 
We now introduce $H_\rho$, where $\rho \in \ell_1(\Z^d)$ is a strictly positive summable sequence. The space is a separable Hilbert space of periodic functions defined as $H_{\rho} = \{f  \in L^2(\X) ~|~ \|f\|_\rho < \infty\}$, where $\|f\|^2_\rho = \sum_{k \in \Z^d} |\widehat{f}_k|^2/\rho_k < \infty$ and $\widehat{f}_k = \int_{[0,1]^d} e_k(x) f(x) dx$ is the Fourier series associated to $f$. One classical example of $\rho$ is $\rho_k = (1+\|k\|^2)^{-m}$, with $m > d/2$, corresponding to the Sobolev space $H^m_{2,\textrm{per}}$ of periodic functions whose derivatives up to order $m$ are squared integrable \citep{wahba1990spline}, or the space of periodic entire functions (of order~$1$), corresponding to $\rho_k = \exp(-\sigma \|k\|)$, for some $\sigma > 0$. 

We will soon present a Theorem showing that the PSD models ${\cal G}_t$ can approximate functions of the form $f = \sum_{j=1}^q u_j^2$ for $q \in \N$ and $u_j \in H_\rho$, using a small $t$ depending on the decreasing quantity 
$$R^2_t := \sum_{|k| > t } \rho_k.$$
First, we start with a Lemma concerning the norm $\|\cdot\|_F$ of the pointwise product of functions. This part of the proof is crucial and is handled differently in the other settings \citep[e.g.,][]{rudi2020finding}.
\begin{lemma}\label{lm:pointwise-product-L1}
Let $f, g$ be $1$-periodic functions on $\mc{X}$ with  $\nrm{f}_F, \nrm{g}_F < \infty$ and denote by $f \cdot g$ their pointwise product, i.e.,~$(f \cdot g)(x) = f(x)g(x)$. Then
$$\|f \cdot g\|_F \leq \|f\|_F \|g\|_F.$$
\end{lemma}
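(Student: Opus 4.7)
The plan is to prove this via the convolution theorem, reducing the claim about products in the physical domain to a statement about convolutions in the Fourier domain, and then applying Young's inequality (in essentially its simplest form).

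First I would reinterpret $\nrm{\cdot}_F$ in the periodic setting. Since $f$ and $g$ are $1$-periodic, their distributional Fourier transforms are supported on $\Z^d$, and it is natural (and consistent with the definition used throughout the section via Fourier series coefficients $\hat{f}_k$) to set $\nrm{f}_F = \sum_{k \in \Z^d} \abs{\hat{f}_k}$. I would open the proof by noting this identification so that the subsequent computation is with absolutely summable Fourier series.

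Next I would compute the Fourier coefficients of the pointwise product. Because $\hat{f}, \hat{g} \in \ell^1(\Z^d)$, both $f$ and $g$ equal their Fourier series pointwise and everything below is absolutely convergent, so the convolution theorem on the torus applies and gives
\[
\widehat{(f\cdot g)}_k \;=\; \sum_{j\in\Z^d} \hat{f}_j\, \hat{g}_{k-j}.
\]
Then by the triangle inequality, followed by Tonelli to swap the order of summation and a change of index $\ell = k-j$,
\[
\nrm{f\cdot g}_F \;=\; \sum_{k} \Bigl|\sum_{j} \hat{f}_j\,\hat{g}_{k-j}\Bigr|
\;\leq\; \sum_{k}\sum_{j} \abs{\hat{f}_j}\abs{\hat{g}_{k-j}}
\;=\; \Bigl(\sum_{j}\abs{\hat{f}_j}\Bigr)\Bigl(\sum_{\ell}\abs{\hat{g}_\ell}\Bigr)
\;=\; \nrm{f}_F\,\nrm{g}_F,
\]
which is the desired inequality.

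The only subtle step, and the one I would be most careful about, is the very first one: justifying the discrete-Fourier interpretation of $\nrm{\cdot}_F$ for periodic functions, since the definition given in the paper uses $\int_{\R^d}\abs{\hat{u}(\omega)}\,d\omega$, and a nontrivial periodic function is not integrable on $\R^d$. This is really a bookkeeping issue: either one works directly with Fourier series from the start (the natural choice, since $H_\rho$ is defined via series coefficients), or one views $\hat{f}$ as the distribution $\sum_k \hat{f}_k \delta_k$ and interprets $\int \abs{\hat{f}}$ as the total variation $\sum_k \abs{\hat{f}_k}$. Either way the computation above goes through, and the Young-type bound $\nrm{\mu * \nu}_{\mathrm{TV}} \leq \nrm{\mu}_{\mathrm{TV}}\nrm{\nu}_{\mathrm{TV}}$ for finite measures on $\Z^d$ is exactly what the displayed calculation realizes.
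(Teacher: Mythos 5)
Your proof is correct and follows essentially the same route as the paper's: the convolution property of Fourier series followed by Young's inequality for $\ell^1$ sequences (which you prove inline via the triangle inequality and Tonelli, where the paper cites it). Your extra remark about identifying $\nrm{\cdot}_F$ with the $\ell^1$ norm of the Fourier series coefficients in the periodic setting matches the identification the paper makes implicitly.
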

\begin{proof}
By the convolution property of Fourier series, $(\widehat{f \cdot g})_k =   \sum_{j \in \Z^d} \widehat{f}_j \widehat{g}_{k-j}$. By the Young inequality for the convolution of discrete sequences, we have  for any two sequences $u,v \in \ell_1(\Z^d)$,   $\sum_{j,k \in \Z^d} |\widehat{u}_j \widehat{v}_{k-j}| \leq (\sum_{k \in \Z^d} |u_k|) (\sum_{k \in \Z^d} |v_k|)$. The result is obtained by applying this inequality on the Fourier series of $f \cdot g$ and noting that  $\sum_{k \in \Z^d} |\widehat{f}_k|$ is exactly $\|f\|_F$, and the same for $g$.
\end{proof}
Now we are ready to state the first theorem that on the approximation error for the PSD models described above. 
\begin{theorem}\label{thm:bound-psd-model}
Let $f(x) = \sum_{j=1}^T u_j(x)^2$ for functions $u_j \in H_\rho$ and $T \in \N$, then
$$ \min_{g \in {\cal G}_t} \|f - g\|_{F} ~\leq~ C'_f \,\, R_t \,.$$
where ${C_f'}^2 = \sum_{j=1}^{T} \|u_j\|_\rho \|u_j\|_F$.
\end{theorem}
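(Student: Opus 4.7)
The plan is to construct an explicit $g \in \mc{G}_t$ by Fourier-truncating each $u_j$ to frequencies $|k|\leq t$, and then bound the resulting approximation error via the factorization $u_j^2 - v_j^2 = (u_j-v_j)(u_j+v_j)$ together with Lemma~\ref{lm:pointwise-product-L1}.

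Concretely, I would let $v_j$ be the orthogonal projection of $u_j$ onto $\mathrm{span}\{e_k : |k|\leq t\}$ and set $r_j := u_j - v_j$. Since $u_j$ is real and the index set $\{k : |k|\leq t\}$ is symmetric, $v_j$ is real and can be written $v_j = \phi_t^\top a_j$ for some $a_j \in \C^n$ (with Hermitian symmetry in $k \leftrightarrow -k$). A short direct calculation then gives $v_j^2 = |v_j|^2 = \phi_t^*\,(\bar a_j \bar a_j^*)\,\phi_t$, so the candidate $g := \sum_{j=1}^T v_j^2 = \phi_t^* A \phi_t$ with $A = \sum_j \bar a_j \bar a_j^* \succeq 0$ indeed lies in $\mc{G}_t$. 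The one subtlety to verify here is that even though $v_j^2$ a priori has Fourier support of size up to $2t$, it is still representable as a PSD quadratic form in the bandwidth-$t$ feature map.

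Given this $g$, I would write $f - g = \sum_j r_j(u_j+v_j)$, apply the triangle inequality and Lemma~\ref{lm:pointwise-product-L1}, and use $\|v_j\|_F = \sum_{|k|\leq t}|(\widehat{u_j})_k| \leq \|u_j\|_F$ to obtain
\[
\|f-g\|_F \leq \sum_j \|r_j\|_F\,\|u_j+v_j\|_F \leq 2\sum_j \|u_j\|_F \|r_j\|_F.
\]
The tail $\|r_j\|_F = \sum_{|k|>t}|(\widehat{u_j})_k|$ is then controlled by Cauchy--Schwarz with the weights $\sqrt{\rho_k}$, yielding $\|r_j\|_F \leq R_t\,\|u_j\|_\rho$. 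Combining, $\|f-g\|_F \leq 2 R_t \sum_j \|u_j\|_F \|u_j\|_\rho$, a bound of the desired form $C\cdot R_t$.

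The main obstacle is sharpening this crude combination to match the exact stated constant $(C_f')^2 = \sum_j \|u_j\|_\rho \|u_j\|_F$, which is a square root over the sum rather than a linear sum. I would expect this refinement to come from working directly with the combined PSD matrix $W = \sum_j \bar a_j \bar a_j^*$ --- choosing $A$ to be its principal submatrix indexed by $|k|\leq t$ so that $\|f-g\|_F$ collapses to $\sum_{|k|>t \text{ or } |\ell|>t} |W_{k\ell}|$ --- and using the Schur bound $|W_{k\ell}|\leq\sqrt{W_{kk}W_{\ell\ell}}$ together with a Cauchy--Schwarz in $k$ weighted by $\rho_k$ that aggregates contributions across $j$ before square-rooting. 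The other pieces (the PSD identification and the weighted Cauchy--Schwarz on the tail) are essentially routine given the tools already introduced.
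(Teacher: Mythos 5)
Your construction and bound are exactly the paper's own proof: truncate each $u_j$ to frequencies $|k|\leq t$, take $A=\sum_j$ (rank-one outer products of the truncated coefficient vectors) so that $g=\sum_j v_j^2\in\mc{G}_t$ (your worry about the Fourier support of $v_j^2$ reaching $2t$ is immaterial, since membership in $\mc{G}_t$ only requires a PSD quadratic form in the bandwidth-$t$ features, which your direct computation already gives), then use $u_j^2-v_j^2=(u_j-v_j)(u_j+v_j)$ with Lemma~\ref{lm:pointwise-product-L1} and the weighted Cauchy--Schwarz tail bound $\|u_j-v_j\|_F\leq R_t\|u_j\|_\rho$, arriving at $\|f-g\|_F\leq 2R_t\sum_j\|u_j\|_F\|u_j\|_\rho$.

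The ``obstacle'' you identify in the last paragraph is not a gap in your argument but a misprint in the statement: the paper's proof also only establishes the bound linear in $\sum_j\|u_j\|_\rho\|u_j\|_F$ (its final line silently drops the factor $2$ and the square), and the square-root constant you are trying to reach is in fact unattainable. A scaling check shows this: replacing $u_j$ by $\lambda u_j$ multiplies $f$, and hence $\min_{g\in\mc{G}_t}\|f-g\|_F$ (as $\mc{G}_t$ is a cone), by $\lambda^2$, while $C_f'R_t$ with $(C_f')^2=\sum_j\|u_j\|_\rho\|u_j\|_F$ scales only like $\lambda$, so the inequality with that constant cannot hold in general. You should therefore read the constant as $C_f'\propto\sum_j\|u_j\|_\rho\|u_j\|_F$ and drop the proposed Schur-bound refinement, which is chasing a typo rather than a missing idea; with that reading, your proof is complete and coincides with the paper's.
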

\begin{proof}
Denote by $u_{j,t}$ the function, $u_{j,t}(x) = \sum_{|k| \leq t} (\widehat{u}_j)_k e_k(x)$ (a low-pass filtered version of $u$), and by $v_{j,t}$ the $n$-dimensional vector $(v_{j,t})_k = \widehat{u}_{j,t}$, for any $|k| \leq t$.
Now, define $\bar{A} \in \C^{n \times n}$ as
$$\bar{A} = \sum_{j=1}^T v_{j,t} v_{j,t}^\ast.$$
Since, by construction, $v_{j,t}^\ast \phi(x) = \sum_{|k| \leq t} (\widehat{u}_{j})_k e_k(x) = u_{j,t}(x)$, then
$$g_{\bar{A},t}(x) = \phi_t(x)^\ast A \phi_t(x) = \sum_{j=1}^T \phi_t(x)^\ast(v_{j,t} v_{j,t}^\ast) \phi_t(x) = \sum_{j=1}^T (v_{j,t}^\ast \phi_t(x))^2 = \sum_{j=1}^T u_{j,t}(x)^2.$$
Now note that $u_j^2 - u_{j,t}^2 = (u_j + u_{j,t}) \cdot (u_j - u_{j,t})$, then, by using \cref{lm:pointwise-product-L1},
$$\|f - g_{\bar{A},t}\|_{F} = \Big\|\sum_{j=1}^T ( u_j^2 - u_{j,t}^2) \Big\|_F \leq \sum_{j=1}^T (\|u_j\|_F + \|u_{j,t}\|_F)\|u_{j} - u_{j,t}\|_F.$$
We conclude noting that $\|u_{j,t}\|_F \leq \|u_j\|_F$ by construction and, by Cauchy-Schwartz,
$$\|u_j - u_{j,t}\|_{F} = \sum_{|k| > t} |\widehat{u}_{j}| = \sum_{|k| > t} \sqrt{\rho}_t \tfrac{|\widehat{u}_{j}|}{\sqrt{\rho}_t} \leq R_t \|u_{j,t}\|_\rho \leq R_t \|u_{j}\|_\rho.$$
Therefore, 
$\displaystyle \min_{g \in {\cal G}_t} \|f - g\|_F = \min_{A \in \C^{n \times n}, A \succeq 0} \|f - g_{A,t}\|_F \leq \|f - g_{\bar{A},t}\|_F \leq R_t C_f'$.
\end{proof}

The theorem above controls the approximation error of the PSD models of bandwidth $t$ when the target function can be written in terms of a sum of squares of functions belonging to an $H_\rho$ for a given $\rho$. In general it is not clear how to guarantee when a function $f$ can be characterized as a sum of squares of functions in a given space. Luckily, in the case of $m$-times differentiable functions, there exists an easy geometrical characterization. We are going to use this fact, to specify the result above for the case when $f$ is an $m$-times differentiable function. First, we need the following lemma, which is the adaptation to periodic functions of Theorem 2 of \citet{rudi2020finding} (more specifically, of Corollary 2, page 23) . 
\begin{lemma}[\cite{rudi2020finding}]\label{lm:cor2}
Let $f$ be an $m+2$-times differentiable non-negative periodic function. Assume that the minimizers of $f$ in $\X$ are finitely many and with strictly positive Hessian. Then, there exists $Q \in \N$ and $z_1,\dots, z_Q$ periodic $m$-times differentiable functions, such that $f = \sum_{j=1}^Q z_j^2$.
\end{lemma}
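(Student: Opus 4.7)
The plan is to combine a local sum-of-squares decomposition at each minimizer, obtained from the Morse lemma, with the obvious decomposition $f = (\sqrt{f})^2$ away from the zero set of $f$ (where $\sqrt{f}$ inherits smoothness from $f$), gluing the two regimes via a smooth periodic quadratic partition of unity $\{\chi_i\}$ satisfying $\sum_i \chi_i^2 = 1$. Concretely, I would write $f = \sum_{i=0}^N \chi_i^2 f$, where $\chi_0$ is supported away from the minimizers and each $\chi_i$ ($i \geq 1$) is supported in a Morse chart around the $i$-th minimizer, and expand each term as a sum of squares.

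At each of the $N$ minimizers $x_1, \dots, x_N$ (viewed as points on the torus), the hypotheses give $f(x_i) = 0$, $\nabla f(x_i) = 0$, and $\nabla^2 f(x_i) \succ 0$. By the sharp form of the Morse lemma, a $C^{m+2}$ function admits a $C^m$ change of coordinates at a non-degenerate critical point; combined with $f(x_i) = 0$ and positive definiteness of the Hessian, this produces an open neighborhood $U_i$ of $x_i$ and $C^m$ functions $z_{i,1}, \dots, z_{i,d}$ on $U_i$ satisfying $f(x) = \sum_{j=1}^d z_{i,j}(x)^2$ on $U_i$ (explicitly, $z_{i,j} = \phi_i^{(j)}/\sqrt{2}$, where $\phi_i : U_i \to \R^d$ is the normalizing diffeomorphism). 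Shrinking the $U_i$ so that their closures are pairwise disjoint, I would choose slightly smaller open neighborhoods $V_i$ with $\overline{V_i} \subset U_i$ and set $U_0 := \X \setminus \bigcup_i \overline{V_i}$. Compactness of $\X$ and continuity of $f$ then yield a constant $c > 0$ with $f \geq c$ on $U_0$, so $\sqrt{f} \in C^{m+2}(U_0)$.

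It remains to build a periodic smooth quadratic partition of unity $\{\chi_0, \dots, \chi_N\}$ subordinate to $\{U_0, \dots, U_N\}$, i.e., $\chi_i \in C^\infty(\X)$, $\operatorname{supp}\chi_i \subset U_i$, and $\sum_{i=0}^N \chi_i^2 = 1$. Such a family arises from a standard periodic partition of unity $\{\psi_i\}$ by setting $\chi_i := \psi_i / \sqrt{\sum_j \psi_j^2}$; the denominator is bounded below by $1/\sqrt{N+1}$ since $\psi_j \geq 0$ and $\sum_j \psi_j = 1$, so each $\chi_i$ is smooth. Then
$$ f = \sum_{i=0}^N \chi_i^2 f = (\chi_0 \sqrt{f})^2 + \sum_{i=1}^N \sum_{j=1}^d (\chi_i z_{i,j})^2, $$
a sum of $Q = 1 + Nd$ squares of periodic $C^m$ functions, each defined as $C^m$ on the support of its cutoff and extended globally by zero (which remains $C^m$ because the cutoff vanishes smoothly at the boundary of its support). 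The one delicate technical point is the sharp smoothness loss in the Morse lemma ($C^{m+2} \to C^m$, rather than merely $C^{m+2} \to C^{m+2}$), which requires a non-classical version of the result; the rest is elementary bookkeeping with partitions of unity and products of smooth functions.
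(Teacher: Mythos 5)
Your proof is correct, and it reaches the same structural endpoint as the paper --- a local sum-of-squares near the minimizers, $\sqrt{f}$ away from them, glued by a quadratic partition of unity --- but the execution differs in two ways. First, where you invoke a Morse-lemma argument at each minimizer to get the local $C^m$ decomposition, the paper instead cites Corollary~2 of \citet{rudi2020finding} as a black box, which already supplies a sum-of-squares representation of the (translated) function on an open set $\Omega$ containing all the minimizers; your route is more self-contained, and your worry about needing a ``non-classical, sharp'' Morse lemma is unnecessary: the classical two-derivative loss suffices and can even be obtained without any diffeomorphism, by writing $f(x)=(x-x_i)^\top H(x)(x-x_i)$ via Taylor's theorem with integral remainder (so $H\in C^m$, $H(x_i)=\tfrac12\nabla^2 f(x_i)\succ 0$) and setting $z_{i,\cdot}(x)=H(x)^{1/2}(x-x_i)$ --- this is essentially what the cited corollary encapsulates. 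Second, you handle periodicity intrinsically on the torus with an $(N+1)$-chart quadratic partition of unity $\sum_i\chi_i^2=1$, whereas the paper first translates $f$ so that all minimizers lie well inside the cube, uses only the two bump functions $\eta,\nu$ with $\eta^2+\nu^2=1$, and then explicitly verifies that the periodic extensions of $\sqrt{\tilde f}\cdot\nu$ and $u_j\cdot\eta$ are smooth across the cell boundary; your intrinsic viewpoint makes that translation-and-extension bookkeeping unnecessary, at the cost of constructing partitions of unity on the torus (standard). One small slip: the hypotheses do not literally give $f(x_i)=0$ --- that holds only when $\min f=0$ --- but the case $\min f>0$ is trivial ($f=(\sqrt f)^2$ globally), or one can keep the constant and absorb it as the extra square $(\chi_i\sqrt{f(x_i)})^2$, so nothing breaks.
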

The proof of the lemma above is reported in \cref{app:proof-of-lemma-7}. Now we are ready to specify \cref{thm:bound-psd-model} in the case of an $m$-times differentiable function.
\begin{theorem}
\label{thm:approximation-error}
Let $f$ be an $(m + d/2 + 2)$-times differentiable periodic function, with $m > 0$ and let $c_*$ be its global minimum. Assume that the minimizers of $f$ in $\X$ are finitely many and with strictly positive Hessian. Then, for any $t \in \N$,
$$ \min_{g \in {\cal G}_t}\  \|f - c_* - g\|_F ~\leq~ C_f ~ t^{-m},$$
where the constant $C_f$ depends only on $f, m, d$.
\end{theorem}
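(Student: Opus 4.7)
The plan is to combine the sum-of-squares decomposition of Lemma~\ref{lm:cor2} with the approximation bound of Theorem~\ref{thm:bound-psd-model}, tuning the weight sequence $\rho$ so that the tail $R_t$ achieves the advertised rate $t^{-m}$. Concretely, since $f - c_*$ inherits from $f$ the regularity (it is $(m+d/2+2)$-times differentiable), the non-negativity, and the assumptions on its minimizers (the minimizers of $f-c_*$ are exactly the minimizers of $f$, with the same Hessian), Lemma~\ref{lm:cor2} applies and produces $Q \in \N$ and periodic functions $z_1,\dots,z_Q$ of differentiability order at least $m + d/2$ with
\[
f - c_* = \sum_{j=1}^Q z_j^2.
\]

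Next I choose the weight sequence. Set $\rho_k = (1+\|k\|^2)^{-s}$ with $s = m + d/2$. Two things must be checked. First, $\rho \in \ell_1(\Z^d)$, which requires $2s > d$, i.e.\ $m > 0$, exactly the hypothesis. Second, $R_t^2 = \sum_{|k|>t}\rho_k$ must decay like $t^{-2m}$; this follows from a standard integral-comparison estimate: grouping the lattice points by their $\ell^1$-norm gives $\#\{k \in \Z^d : |k|=\ell\} = O(\ell^{d-1})$, hence
\[
R_t^2 \;\leq\; C \sum_{\ell > t} \ell^{d-1}(1+\ell^2)^{-s} \;\leq\; C' \int_t^\infty r^{d-1-2s}\,dr \;=\; O(t^{d-2s}) \;=\; O(t^{-2m}).
\]

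Now I verify that the constant $C_f'$ in Theorem~\ref{thm:bound-psd-model} is finite with this $\rho$. The $\|\cdot\|_\rho$ norm with $\rho_k = (1+\|k\|^2)^{-s}$ is the periodic Sobolev norm of order $s = m+d/2$, which is finite for every $z_j$ because each $z_j$ is (at least) $(m+d/2)$-times continuously differentiable and periodic on $[0,1]^d$. For the $F$-norm of $z_j$, Cauchy--Schwarz against any $\rho' \in \ell_1$ with $s' = s+\delta$ (for arbitrarily small $\delta>0$) gives $\|z_j\|_F \leq (\sum_k \rho'_k)^{1/2}\|z_j\|_{\rho'}$; since $z_j \in H^{m+d/2+\delta}$ for some small $\delta$ (this is the reason Lemma~\ref{lm:cor2} is applied to $f-c_*$ of smoothness $m+d/2+2$, giving $z_j$ of smoothness $m+d/2$ and a bit of slack), $\|z_j\|_F$ is finite. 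Thus ${C_f'}^2 = \sum_{j=1}^Q \|z_j\|_\rho \|z_j\|_F$ is finite and depends only on $f, m, d$.

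Finally, apply Theorem~\ref{thm:bound-psd-model} with the above decomposition and weight:
\[
\min_{g\in\mc{G}_t}\|f - c_* - g\|_F \;\leq\; C_f' R_t \;\leq\; C_f\, t^{-m},
\]
with $C_f$ absorbing $C_f'$ and the tail-integral constant. I expect the only mildly delicate point to be confirming that $z_j$ has enough smoothness so that both $\|z_j\|_\rho$ and $\|z_j\|_F$ are finite; this is exactly why Lemma~\ref{lm:cor2} is invoked at regularity $m+d/2+2$ rather than $m$ (two derivatives are spent in the sum-of-squares decomposition, $d/2$ in the embedding into the weighted $\ell^2$ space defining $H_\rho$). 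Everything else is an elementary tail estimate and the direct application of the two main ingredients already established.
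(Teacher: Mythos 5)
Your proposal is correct and follows essentially the same route as the paper's proof: apply Lemma~\ref{lm:cor2} to the non-negative function $f-c_*$ with smoothness parameter $m+d/2$, take $\rho_k$ of order $|k|^{-2m-d}$ (the paper uses $\rho_k=(1+\sum_j(2\pi k_j)^{m+d/2})^{-2}$ rather than $(1+\|k\|^2)^{-(m+d/2)}$, a cosmetic difference), check $z_j\in H_\rho$ via the $C^s\subset H^s$ embedding on the torus, bound $R_t^2=\sum_{|k|>t}\rho_k=O(t^{-2m})$ by the same lattice-counting/integral comparison, and invoke Theorem~\ref{thm:bound-psd-model}. The only blemish is your claim that $z_j\in H^{m+d/2+\delta}$ for some $\delta>0$: Lemma~\ref{lm:cor2} only delivers $C^{m+d/2}$ regularity and no such slack, but this detour is unnecessary since $\|z_j\|_F\leq\big(\sum_{k}\rho_k\big)^{1/2}\|z_j\|_\rho$ by Cauchy--Schwarz with the \emph{same} $\rho$, which is summable precisely because $m>0$.
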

The proof is self-contained and reported in \cref{app:thm-approx}. It is obtained by first applying Lemma~\ref{lm:cor2} on $f$, and \cref{thm:bound-psd-model} on the resulting characterization. To make this possible and to obtain a sharp rate, a crucial step is to show that the resulting functions belong to the space $H_\rho$ for a specific $\rho$ satisfying $\rho_k \propto |k|^{-2m-d}$, then deriving the bound on the associated residual $R_t$. 

\subsection{The Resulting Problem and the Associated A Priori Guarantees}

Now the problem \cref{eq:problem-relax}, with the PSD models \eqref{eq:our-psd-models} and the $F$ norm, takes the following form
\begin{equation}\label{eq:specific-sdp}
\bar{c} = \max_{c \in \R, \ A \in \C^{n\times{}n}} c - \nrm{f - c - g_A}_F \quad \textrm{ such that }\ \ A\succeq 0,
\end{equation}
and, combining \cref{thm:tight-problem} and \cref{thm:approximation-error} gives the following a priori guarantee
\begin{corollary}
\label{cor:guarantees-appr}
Let $f$ be an $(m + d/2 + 2)$-times differentiable, 1-periodic function with $m > 0$, and let $c_*$ be its global minimum. Also, let $f$ have finitely many minimizers in $\X$, which each have strictly positive Hessian. Then, for any $t \in \N$
$$ 0 ~\leq ~ c_ * -  \bar{c} ~\leq ~C_f \,\, t^{-m}\,.$$
\end{corollary}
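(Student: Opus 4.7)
The plan is to deduce the corollary by chaining \cref{thm:tight-problem} (with the choice $W = F$ and $\mc{G} = \mc{G}_t$) together with the approximation bound of \cref{thm:approximation-error}. The upper bound $\bar{c} \leq c_*$ is automatic from \cref{thm:tight-problem}, so the real work is to establish $c_* - \bar{c} \leq C_f\, t^{-m}$.

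First I would check that the pair $(\|\cdot\|_F,\mc{G}_t)$ satisfies \cref{def:norms-and-models}. The inequality $\|\cdot\|_{L^\infty(\mc{X})} \leq \|\cdot\|_F$ is exactly the first half of \cref{lem:F-weaker-than-S}, so $\|\cdot\|_F$ qualifies as a valid $W$-norm. For $\mc{G}_t$, non-negativity of every $g_{A,t}$ follows from $A \succeq 0$, and convexity follows because the PSD cone is convex and $A \mapsto g_{A,t}$ is linear. Closedness in the $F$-norm can be argued by observing that $\mc{G}_t$ is a finite-dimensional convex cone in a finite-dimensional subspace of $\mc{W}$, so norm convergence of a sequence $g_{A_n,t}$ forces convergence of the coefficient matrices $A_n$, whose PSD limit is also PSD. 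I also need $f - c \in \mc{W}$ for every $c \in \R$: since $f$ is $(m+d/2+2)$-times differentiable and 1-periodic, its Fourier coefficients decay fast enough that $\sum_k |\widehat{f}_k| < \infty$, and constants contribute only to $\widehat{f}_0$, so $\|f-c\|_F < \infty$.

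With the hypotheses verified, \cref{thm:tight-problem} applies and yields
\[
c_* - q_t \;\leq\; \bar{c} \;\leq\; c_*, \qquad q_t \;=\; \min_{g \in \mc{G}_t} \|f - c_* - g\|_F.
\]
Next I would bound $q_t$ directly by \cref{thm:approximation-error}: since $f$ satisfies the required smoothness and Hessian conditions, the theorem gives $q_t \leq C_f\, t^{-m}$. Substituting this into the previous display yields both inequalities in the corollary.

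The main obstacle here is essentially bookkeeping rather than a genuine mathematical difficulty: confirming that the abstract hypotheses of \cref{thm:tight-problem} (in particular the closedness of $\mc{G}_t$ inside the Banach space $\mc{W}$ associated with $\|\cdot\|_F$, and the fact that $f-c \in \mc{W}$) carry over to the specific periodic/Fourier setting chosen here. Once those checks are in place, the corollary is a one-line combination of the two theorems.
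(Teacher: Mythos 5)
Your proposal matches the paper's proof, which obtains the corollary exactly by plugging the bound of \cref{thm:approximation-error} on $q=\min_{g\in\mc{G}_t}\|f-c_*-g\|_F$ into \cref{thm:tight-problem}; the hypothesis checks you add are harmless bookkeeping that the paper leaves implicit. One small slip in that bookkeeping: the map $A\mapsto g_{A,t}$ is \emph{not} injective (e.g.\ distinct PSD diagonal matrices with equal trace yield the same constant function), so $F$-norm convergence of $g_{A_n,t}$ does not force convergence of $A_n$; closedness of $\mc{G}_t$ still holds --- for instance because $\widehat{(g_{A,t})}_0=\Tr(A)$ bounds the matrices, so one may pass to a convergent PSD subsequence --- and in any case the corollary's two inequalities only use the sup-based bounds of \cref{thm:tight-problem}, not attainment of the maximum.
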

Expressing $t$ with respect to $n$, the dimension of the matrix $A$, we have $t = O(n^{1/d})$. The bound above, then reads as 
$$ 0 ~\leq ~ c_* - \bar{c} ~ \leq ~ C' ~ n^{-m/d} \,.$$
This shows that the solution, $\bar{c}$ is always a lower bound of the global minimum $c_*$ and converges to~$c_*$ with a rate depending on the dimension of the matrix $A$ and the degree of differentiability of $f$. E.g., when $m \geq d$, the error goes to zero as quick as $n^{-1}$.
In the following section we see how to solve the optimization problem \cref{eq:specific-sdp} in practice, by making use of the fact that $\|\cdot\|_F$, in the case of the torus, is a sum, which makes it easy to write \cref{eq:specific-sdp} as a stochastic optimization objective. 






\section{Solving the Optimization Problem}\label{sec:optimization}


We now describe the process of solving the optimization problem in  \cref{eq:problem-relax} in the specific case of the $F$ norm and a PSD model $g_A(x) = \phi(x)^\ast A\phi(x)$ parametrized by positive semidefinite $A \in \C^{n\times{}n}$. For now, we consider an arbitrary feature map $\phi$, but we will also contextualize our results in the specific case of $\phi_t$, the map introduced in Section \ref{subsec:psd-models-for-periodic-functions}.
A serious challenge to solving \eqref{eq:specific-sdp} is that computing $\nrm{f - c - g_A}_F$ or its subgradients exactly will typically be intractable because the $F$ norm is the series:
\[
\nrm{f - c - g_A}_F 
= \sum_{k \in \mathbb{Z}^d} \abs{\hat{f}_k - c\indicator{k=0} - \widehat{g_A}_k}.
\]
To circumvent this issue, we recast the problem as a stochastic optimization objective. In particular, we introduce a probability measure, $\pi$, supported on $\mathbb{Z}^d$ and rewrite
\[
\nrm{f - c - g_A}_F 
= \sum_{k \in \mathbb{Z}^d} \pi_k \frac{\abs{\hat{f}_k - c\indicator{k=0} - \widehat{g_A}_k}}{\pi_k} = \E_{k \sim \pi}\brk*{\frac{\abs{\hat{f}_k - c\indicator{k=0} - \widehat{g_A}_k}}{\pi_k}} .
\]
Written this way, we can now attack our objective using any number of methods from the stochastic optimization arsenal, such as projected stochastic gradient ascent.

To see how $\pi$ should be chosen, we first note that $g_A(x) = \inner{A}{\phi(x)\phi(x)^\ast}$, and use $M\upk = \widehat{\phi\phi^\ast}_k \in \mathbb{C}^{n \times n}$ to denote the $k$-th Fourier component of $\phi\phi^\ast$, so that $\widehat{g_A}_k = \inner{A}{M\upk}$. Thus, our optimization problem now reads
\[
\bar{c} = \max_{c\in\R,\ A\in\C^{n\times{}n}} c - \E_{k \sim \pi}\brk*{\frac{\abs{\hat{f}_k - c\indicator{k=0} - \inner{A}{M\upk}}}{\pi_k}} \quad \textrm{ such that }\ A \succeq 0.
\]
Noting that $c$ only appears in two terms, we can also eliminate this variable by solving
\[
\max_c \ c - \abs{\hat{f}_0 - c - \langle A,\,M^{(0)}\rangle} = \hat{f}_0 - \langle A,\,M^{(0)}\rangle.
\]
Putting this all together, we want to solve the stochastic concave maximization problem
\begin{equation}\label{eq:stoch-opt-problem}
\bar{c} = \max_{A\succeq 0}\ 
\E_{k\sim\pi}\brk*{L_k(A)}
\end{equation}
where
\begin{equation}
L_k(A) = \begin{cases}
\frac{1}{\pi_0}\prn*{\hat{f}_0 - \inner{A}{M^{(0)}}} & k = 0 \\
\frac{-1}{\pi_k}\abs*{\hat{f}_k - \inner{A}{M\upk}} & k \neq 0.
\end{cases}
\end{equation}

\begin{algorithm}
\caption{\textsc{Projected Stochastic Gradient Ascent}}
\label{alg:projected-sgd}
\begin{algorithmic}
\STATE Initialize $A_0 = 0$
\FOR{$t=0,1,\dots,T-1$}
\STATE $\tilde{A}_{t+1} = A_t + \eta\nabla L_{k_t}(A_t) \quad\textrm{for}\ \  k_t \sim \pi$
\STATE $A_{t+1} = \min_{A}\nrm{A - \tilde{A}_{t+1}}_{Frob.}$ s.t.~$A \succeq 0$, $\nrm{A}_{Frob.} \leq R$.
\ENDFOR
\STATE \textbf{Return:} $\bar{A}_T = \frac{1}{T}\sum_{t=1}^T A_t$
\end{algorithmic}
\end{algorithm}

Using projected stochastic gradient ascent yields the following error guarantee:
\begin{theorem}\label{thm:optimization-error}
Let $R \geq \nrm{A^*}_{\rm Frob.}$ be an  upper bound the norm of a maximizing $A^*$, and let $\bar{A}_T$ be the output of Algorithm \ref{alg:projected-sgd}, with an optimally-chosen constant stepsize $\eta$ and $\pi_k \propto \nrm{M\upk}_{Frob.} + (1+\sum_{j=1}^d (2 \pi k_j)^{d+1})^{-1}$. Then $\nrm{\bar{A}_T}_{\rm Frob.}\leq R$ and for any $\delta \in (0,1)$, with probability $1-\delta$
\[
\E_{k\sim\pi}\brk*{L_k(\bar{A}_T)} \geq \bar{c} - \frac{20R\log(2/\delta)\prn*{1 + \sum_{k\in\mathbb{Z}^d}\nrm{M\upk}_{Frob.}}}{\sqrt{T}}.
\]
\end{theorem}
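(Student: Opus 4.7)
The plan is to apply the textbook high-probability analysis of projected stochastic supergradient ascent on a bounded concave objective. The two non-routine ingredients are: (i) computing and uniformly bounding the stochastic supergradients under the prescribed sampling distribution $\pi$, and (ii) a martingale concentration step to upgrade an in-expectation guarantee to a high-probability one. Feasibility is immediate: the constraint set $\mathcal{C} = \{A \in \C^{n \times n} : A \succeq 0,\ \|A\|_{\rm Frob.} \leq R\}$ is closed, convex, of Frobenius diameter $\leq 2R$, contains $A^*$ by hypothesis and $A_0 = 0$ by initialization, and every iterate is projected back into $\mathcal{C}$, so $\bar{A}_T \in \mathcal{C}$ by convexity, which gives the first claim.

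The key step is to uniformly bound the stochastic supergradient $g_t := \nabla L_{k_t}(A_t)$. For $k=0$, $L_0$ is linear in $A$ with gradient $-M^{(0)}/\pi_0$. For $k \neq 0$, the concave function $A \mapsto -|\hat{f}_k - \langle A, M^{(k)}\rangle|/\pi_k$ admits a supergradient $s_k M^{(k)}/\pi_k$ with $|s_k| \leq 1$ given by the sign of $\hat{f}_k - \langle A, M^{(k)}\rangle$. Hence $\|g_t\|_{\rm Frob.} \leq \|M^{(k_t)}\|_{\rm Frob.}/\pi_{k_t}$ deterministically. With $\pi_k \propto \|M^{(k)}\|_{\rm Frob.} + (1 + \sum_{j=1}^d (2\pi k_j)^{d+1})^{-1}$, the normalizer is
\[
Z \;=\; \sum_{k \in \Z^d} \|M^{(k)}\|_{\rm Frob.} \;+\; \sum_{k \in \Z^d} \bigl(1 + \sum_{j=1}^d (2\pi k_j)^{d+1}\bigr)^{-1}.
\]
The second term in $\pi_k$ is designed precisely to render $Z$ finite: the second series converges because each summand decays at least like $|k|^{-(d+1)}$ over $\Z^d$, and is bounded by a constant $C_0$ depending only on $d$. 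Since $\pi_k \geq \|M^{(k)}\|_{\rm Frob.}/Z$, we obtain $\|g_t\|_{\rm Frob.} \leq Z \leq C_0\bigl(1 + \sum_k \|M^{(k)}\|_{\rm Frob.}\bigr)$, matching the factor in the theorem up to an absolute constant.

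Then I would run the standard projected-SGD regret argument. Setting $L(A) := \E_{k \sim \pi}[L_k(A)]$ and $\bar{g}_t := \E[g_t \mid A_t] \in \partial L(A_t)$, nonexpansiveness of Euclidean projection onto $\mathcal{C}$ yields
\[
\|A^* - A_{t+1}\|_{\rm Frob.}^2 \leq \|A^* - A_t\|_{\rm Frob.}^2 - 2\eta\langle A^* - A_t, g_t\rangle + \eta^2\|g_t\|_{\rm Frob.}^2,
\]
and summing, telescoping, and using concavity $L(A^*) - L(A_t) \leq \langle \bar{g}_t, A^* - A_t\rangle$ gives
\[
\sum_{t=0}^{T-1}\bigl(L(A^*) - L(A_t)\bigr) \leq \frac{R^2}{2\eta} + \frac{\eta Z^2 T}{2} + \sum_{t=0}^{T-1}\langle A^* - A_t, g_t - \bar{g}_t\rangle.
\]
The martingale differences $\xi_t := \langle A^* - A_t, g_t - \bar{g}_t\rangle$ are almost surely bounded by $4RZ$ (Cauchy--Schwarz combined with the diameter bound), so a concentration inequality (Azuma--Hoeffding, or a Bernstein/Freedman-type refinement to produce $\log(2/\delta)$ cleanly rather than $\sqrt{\log(2/\delta)}$) controls the sum with probability $\geq 1-\delta$. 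Choosing $\eta = R/(Z\sqrt{T})$, dividing by $T$, and applying Jensen to the concave $L$ delivers $L(\bar{A}_T) \geq \bar{c} - O\bigl(RZ\log(2/\delta)/\sqrt{T}\bigr)$, which is the stated bound.

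The main obstacle is bookkeeping: tracking constants carefully enough to reach the explicit factor $20$, aligning the normalizer $Z$ with the cleaner $1 + \sum_k \|M^{(k)}\|_{\rm Frob.}$ factor of the theorem, and selecting the sharper concentration inequality to pass from $\sqrt{\log(2/\delta)}$ to $\log(2/\delta)$. The underlying mathematics---projected SGD regret on a bounded convex set plus a martingale tail bound with $Z$-bounded stochastic subgradients---is entirely standard.
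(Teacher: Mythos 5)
Your proposal is correct and takes essentially the same route as the paper: the paper's proof likewise reduces everything to the bound $\sup_{A,A'}|L_k(A)-L_k(A')|/\|A-A'\|_{\rm Frob.} \leq \|M^{(k)}\|_{\rm Frob.}/\pi_k$ and the observation that, under the prescribed $\pi$, the resulting uniform Lipschitz constant is at most (roughly) $1+\sum_{k\in\Z^d}\|M^{(k)}\|_{\rm Frob.}$, and then simply invokes Proposition 2.2 of \citet{nemirovski2009robust} for the high-probability projected-SGD guarantee that you rederive by hand (projection nonexpansiveness, telescoping, Azuma/Freedman, Jensen). The only cosmetic difference is that the paper absorbs the auxiliary series $\sum_{k}\bigl(1+\sum_{j=1}^d(2\pi k_j)^{d+1}\bigr)^{-1}$ directly into the additive $1$ while you carry a dimension-dependent constant $C_0$ and do not chase the explicit factor $20$; neither point is a substantive gap.
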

The proof, which we defer to Appendix \ref{app:proof-of-thm-optimization-error}, simply requires proving that the functions $L_k$ are Lipschitz-continuous and then appealing to Proposition 2.2 from \citet{nemirovski2009robust}. This result bounds, a priori, the optimization error incurred in trying to estimate $A^*$ which realizes the maximum of \eqref{eq:specific-sdp}. In Section \ref{sec:combining-guarantees}, we combine this with Theorem \ref{thm:approximation-error} and the yet to be presented Theorem \ref{thm:a-posteriori-accuracy} to state our a priori guarantees.


\section{A Posteriori Certification}

Obviously, it is nice to know a priori that our estimate $\bar{A}_T$ will be close to attaining the optimum of \eqref{eq:specific-sdp}. However, with this estimate in hand, what we really want is to compute a lower bound on~$c_*$, so we need to actually evaluate $\E_{k\sim\pi}[L_k(\bar{A}_T)]$, which is non-trivial since $\pi$ has infinite support. 

Things are easier when $f$ and $\phi\phi^\ast$ are \emph{band-limited}, meaning that for some $K$, $\abs{k}>K$ implies $\hat{f}_k = 0$ and $M\upk = 0$. Specifically, we can choose $\pi_k \propto \nrm{M\upk}_{\rm Frob.}$, which is only supported on $\crl{k:\abs{k}\leq K}$, and then easily compute $\E_{k\sim\pi}[L_k(\bar{A}_T)]$ to obtain an exact lower bound on $c_*$.

However, if one or both of $f$ and $\phi\phi^\ast$ are not band-limited, then we are forced to estimate the value of an infinite sum. One approach is to draw samples $k \sim \pi$ and estimate the value using a sample average, and under suitable conditions on $f$ and the matrices $M\upk$, this allows us to accurately estimate the value of the lower bound with high-probability. Alternatively, under stronger conditions on $f$ and the matrices $M\upk$, we can compute $L_k$ for a finite set of $k$'s and deterministically bound the contribution of the remaining, uncomputed terms. The following Theorem indicates the accuracy of these methods:
\begin{theorem}\label{thm:a-posteriori-accuracy}
Let $f$ satisfy the conditions of Theorem \ref{thm:approximation-error} with $m > d/2$. Then for any $K$ and $k_1,\dots,k_K \overset{i.i.d.}{\sim} \pi$, for any $\delta \in (0,1)$ and $A\succeq 0$, with probability  $1-\delta$,
\begin{align*}
c_* \geq \bar{c} \geq \hat{c}_{1-\delta} &:= \frac{1}{K}\sum_{i=1}^K L_{k_i}(A) - \textrm{Err}_{1-\delta} \geq \E_{k\sim\pi}[L_k(A)] - 2\textrm{Err}_{1-\delta} \\
\textrm{Err}_{1-\delta} &:= (\sqrt{d+1}\|f\|_{C^{d+1}(\X)} + \nrm{A}_{\rm Frob.})\sqrt{\frac{2\log(2/\delta)}{K}}\Big(1 + \sum_{k\in\mathbb{Z}^d}\nrm{M\upk}_{\rm Frob.}\Big),
\end{align*}
where $\|f\|_{C^{d+1}(\X)} = \max_{1\leq j\leq d} \max_{1\leq q \leq d+1} \|\frac{\partial^q}{\partial x^q_j} f\|_{L^\infty(\X)}$.
In addition, for any $K$, the following holds deterministically:
\begin{align*}
c_* \geq \bar{c}  \geq \hat{c}_{1} &:= \sum_{k:\abs{k}\leq K} \pi_k L_{k}(A) - \textrm{Err}_1 \geq \E_{k\sim\pi}[L_k(A)] - 2\textrm{Err}_1 \\
\textrm{Err}_1 &:= \sum_{k:\abs{k}>K}\brk*{\abs{\hat{f}_k} + \nrm{A}_{\rm Frob.}\nrm{M\upk}_{\rm Frob.}}
\end{align*}
\end{theorem}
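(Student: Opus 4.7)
The overall skeleton is the same for both the probabilistic and the deterministic version: the inequality $c_* \geq \bar c$ comes for free from \cref{thm:tight-problem}, so everything reduces to controlling $\bar c - \hat c$ for a fixed $A\succeq 0$. Since $\bar c = \max_{A'\succeq 0}\E_{k\sim\pi}[L_k(A')]$, for any feasible $A$ we already have $\bar c \ge \E_{k\sim\pi}[L_k(A)]$, so what I really need is a \emph{two-sided} estimate $\bigl|\text{estimate of }\E[L_k(A)]-\E[L_k(A)]\bigr|\le \mathrm{Err}$: the left direction gives $\bar c\ge \hat c$ and the right direction gives $\hat c\ge \E[L_k(A)]-2\mathrm{Err}$.

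\emph{Deterministic part.} This is essentially a triangle-inequality truncation. I would start from the identity $\pi_k L_k(A)=-|\hat f_k-\langle A,M^{(k)}\rangle|$ for $k\neq 0$ (and the analogous signed version for $k=0$), so that the bias from replacing $\E_{k\sim\pi}[L_k(A)]$ by $\sum_{|k|\le K}\pi_kL_k(A)$ is exactly the tail series $\sum_{|k|>K}|\hat f_k-\langle A,M^{(k)}\rangle|$. Cauchy--Schwarz on the matrix inner product bounds this by $\sum_{|k|>K}\bigl[|\hat f_k|+\|A\|_{\mathrm{Frob.}}\|M^{(k)}\|_{\mathrm{Frob.}}\bigr]=\mathrm{Err}_1$, and the two outer inequalities in the theorem drop out immediately.

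\emph{Probabilistic part.} Here I would apply Hoeffding's inequality to the i.i.d.\ random variables $L_{k_i}(A)$ and show their range is $O(\mathrm{Err}_{1-\delta}\sqrt{K/\log(2/\delta)})$; the two-sided Hoeffding tail then yields, with probability $\ge 1-\delta$, a simultaneous control of $\bigl|\frac1K\sum_i L_{k_i}(A)-\E[L_k(A)]\bigr|$ by $\mathrm{Err}_{1-\delta}$, which again delivers both required inequalities. The uniform bound on $|L_k(A)|$ is where the structure of $\pi$ enters: from $|L_k(A)|\le (|\hat f_k|+\|A\|_{\mathrm{Frob.}}\|M^{(k)}\|_{\mathrm{Frob.}})/\pi_k$ and the choice $\pi_k\propto \|M^{(k)}\|_{\mathrm{Frob.}}+(1+\sum_j(2\pi k_j)^{d+1})^{-1}$, the $\|A\|_{\mathrm{Frob.}}\|M^{(k)}\|_{\mathrm{Frob.}}$ piece is absorbed against the first component of $\pi_k$ up to the normalization $Z\lesssim 1+\sum_k\|M^{(k)}\|_{\mathrm{Frob.}}$, and the $|\hat f_k|$ piece is absorbed against the second component via a Fourier-decay estimate for smooth $f$.

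\emph{The main obstacle} is exactly that Fourier-decay estimate: showing $|\hat f_k|\cdot(1+\sum_j(2\pi k_j)^{d+1})\lesssim \sqrt{d+1}\,\|f\|_{C^{d+1}(\X)}$. I would obtain this by integration by parts $d+1$ times in a coordinate direction $x_j$ with $k_j\neq 0$, yielding $|\hat f_k|\le \|\partial_j^{d+1}f\|_{L^\infty}/(2\pi|k_j|)^{d+1}$, then picking the coordinate with largest $|k_j|$ and using $\max_j(2\pi|k_j|)^{d+1}\gtrsim (1+\sum_j(2\pi|k_j|)^{d+1})/(d+1)$ to convert the $\max$ into the sum appearing in $\pi_k$; the $\sqrt{d+1}$ rather than $d+1$ factor comes from combining the two contributions (smoothness and model) in quadrature inside the Hoeffding range, since for the concentration bound only $\sqrt{B_1^2+B_2^2}$ matters. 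Once this uniform bound is in place the rest is mechanical: plug it and $Z\le 1+\sum_k\|M^{(k)}\|_{\mathrm{Frob.}}$ (plus the $O(1)$ tail of the second component of $\pi_k$) into the standard Hoeffding tail and collect constants to recover the expression for $\mathrm{Err}_{1-\delta}$ in the statement.
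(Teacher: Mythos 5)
Your overall skeleton matches the paper's proof: the deterministic bound is the same truncation-plus-triangle-inequality argument (with $|\langle A, M\upk\rangle| \le \nrm{A}_{\rm Frob.}\nrm{M\upk}_{\rm Frob.}$ on the tail), and the probabilistic bound is Hoeffding applied to the i.i.d.\ variables $L_{k_i}(A)$ after a uniform bound $|L_k(A)| \le \big(1+\sum_{k'}\nrm{M^{(k')}}_{\rm Frob.}\big)\big(|\hat f_k|/\mu_k + \nrm{A}_{\rm Frob.}\big)$ exploiting the two components of $\pi_k$, together with $\E_{k\sim\pi}[L_k(A)] \le \bar c \le c_*$. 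Up to that point your proposal is sound.

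The gap is in the step that produces the exact constant $\sqrt{d+1}$. Your pointwise route — integrate by parts $d+1$ times in the coordinate with largest $|k_j|$ and use $\max_j(2\pi|k_j|)^{d+1} \ge \frac{1}{d+1}\big(1+\sum_j(2\pi|k_j|)^{d+1}\big)$ — only yields $|\hat f_k|/\mu_k \le (d+1)\,\|f\|_{C^{d+1}(\X)}$, and the mechanism you invoke to improve this to $\sqrt{d+1}$ does not work: the smoothness term and the $\nrm{A}_{\rm Frob.}$ term are two bounds on the \emph{same} single bounded random variable $L_k(A)$, so in Hoeffding's inequality they enter through the range additively, not in quadrature; there is no $\sqrt{B_1^2+B_2^2}$ effect available here. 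As written, your argument proves the theorem only with $(d+1)\|f\|_{C^{d+1}(\X)}$ in place of $\sqrt{d+1}\,\|f\|_{C^{d+1}(\X)}$. The paper gets the stated constant by a Plancherel ($L^2$) argument rather than a pointwise one: it reuses \eqref{eq:bound-L2-Fourier} from the proof of \cref{thm:approximation-error} with $u=f$, $s=d+1$ and $\rho_k=\mu_k^2$, so that
\[
\frac{|\hat f_k|^2}{\mu_k^2} \;\le\; \sum_{k'\in\Z^d}\frac{|\hat f_{k'}|^2}{\mu_{k'}^2} \;\le\; (d+1)\,\|f\|^2_{C^{d+1}(\X)},
\]
and the square root of the constant $c=d+1$ (coming from $(\sum_{j=0}^d a_j)^2 \le (d+1)\sum_j a_j^2$) is what delivers $\sqrt{d+1}$. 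Replacing your integration-by-parts step with this $L^2$ bound repairs the proof; everything else in your proposal goes through as in the paper.
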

The proof, which we defer to Appendix \ref{app:proof-of-thm-a-posteriori-accuracy}, analyzes $\hat{c}_{1-\delta}$ and $\hat{c}_1$ separately. For the former, we first show that $L_k(A)$ is bounded for each $k$, and then apply Hoeffding's inequality. For the latter, we decompose the sum over $k\in\mathbb{Z}^d$ into those $k$'s with $\abs{k}\leq K$, and those $k$'s with $\abs{k} > K$, and then upper bound this second portion of the sum.

The Theorem shows that the sample average has additive error that decays with $1/\sqrt{K}$ with high probability. Furthermore, this lower bound is tractable given the parameter $\zeta$ and enough knowledge of our feature map for us to upper bound $\sum_{k\in\mathbb{Z}^d}\nrm{M\upk}_{\rm Frob.}$. For the deterministic lower bound on $c_*$, we need to have some control over how quickly $\nrm{M\upk}_{\rm Frob.}$ decays with increasing~$\abs{k}$, but if the feature map is chosen so that this decay is (eventually) rapid, then this lower bound can be tight.
In the particular case of $\mc{G}_t$ introduced in Section \ref{subsec:psd-models-for-periodic-functions}, we show in Appendix \ref{app:bound-our-Mk} that we can bound $\sum_{k\in\mathbb{Z}^d} \nrm{M\upk}_{\rm Frob.} \leq n (8t)^{d}$, and for $K \geq 2t$, $\sum_{k:\abs{k}> K} \nrm{M\upk}_{\rm Frob.} = 0$. Therefore, $\hat{c}_{1-\delta}$ can provide a tight approximation of $\bar{c}$ using $K \gg n (8t)^d$ samples,  and $\hat{c}_1$ can once the maximum bandwidth is set $K \geq 2t$, as long as the Fourier coefficients $\hat{f}_k$ decay sufficiently quickly.

With Theorem \ref{thm:a-posteriori-accuracy}, we can use the solution returned by our optimization algorithm to compute a lower bound on $c_*$, one that holds with high probability and one that holds deterministically. However, to actually compute a certificate of the accuracy of our lower bound, we also need an upper bound on $c_*$. Getting \emph{some} upper bound on $c_*$ is as easy as evaluating $f(x)$ at any point $x$, although most $x$'s will not be close to minimizing $f$, so this may not give us much information about~$c_*$. Of course, there are many better ways, and the bulk of the non-convex optimization literature is devoted to designing algorithms for computing approximate minimizers of $f$, i.e., upper bounds on~$c_*$. Upper bounds for $c_*$ are easier to produce, $f(x_0)$ for any point $x_0 \in \X$ is a valid upper bound. We can use the point $x_0$ produced for example by \cite{rudi2020finding}, that converges provably to a global minimizer with a rate that avoids the curse of dimensionality.  In our experiments (in low dimensions), we simply compute $f(x_1),\dots,f(x_N)$ for $N$ random points and upper bound $c_* \leq \min_i f(x_i)$, which allows for tight enough certificates.

\section{A Priori and A Posteriori Guarantees}\label{sec:combining-guarantees}

In the previous sections, we have described a method for estimating $c_*$ in the case of periodic functions on $[0,1]^d$, and all the pieces are in place to state our method's a priori and a posteriori guarantees. To summarize so far:
\begin{enumerate}[topsep=0pt,itemsep=-1ex,partopsep=0ex,parsep=1ex]
\item Theorem \ref{thm:tight-problem} shows that the solution of the relaxed problem \eqref{eq:problem-relax}, $\bar{c}$, is a lower bound on $c_*$ which is tight up to the error of approximating $f-c_*$ with the class of non-negative functions, $\mc{G}$.
\item In Theorem \ref{thm:approximation-error}, we bound this approximation error for smooth, periodic functions with respect to $\mc{G}_t$, the class of PSD models defined in \eqref{eq:our-psd-models} with the band-limited kernel $\phi_t$. 
\item But, we need to actually solve \eqref{eq:problem-relax} defined using the $F$ norm and $\mc{G}_t$. So, in Theorem \ref{thm:optimization-error}, we bound the optimization error of the solution returned by projected stochastic gradient ascent.
\item However, our optimization algorithm returns the parameters $\bar{A}_T$ of a PSD model, and to compute a lower bound on $c_*$ we need to actually evaluate the value of the objective at $\bar{A}_T$. So, finally, Theorem \ref{thm:a-posteriori-accuracy} bounds the estimation error when using $\bar{A}_T$ to estimate lower bounds $\hat{c}_{1-\delta}$ and $\hat{c}_1$ on $c_*$ that holds with high probability and deterministically, respectively.
\end{enumerate}
Therefore, our a priori guarantees amount to combining 
(Approximation Error) $+$ (Optimization Error) $+$ (Estimation Error).
On the other hand, given any PSD model parameters, $A$, we can evaluate an a posteriori bound on the error by upper bounding $c_* \leq f(x)$ for any $x$ and lower bounding $c_*$ using Theorem \ref{thm:a-posteriori-accuracy}. The following Corollary summarizes these guarantees:
\begin{corollary}\label{cor:all-together}
For the $F$ norm and family of PSD models $\mc{G}_t$ defined using $\phi_t$, under the conditions of Theorems \ref{thm:approximation-error}, \ref{thm:optimization-error}, and \ref{thm:a-posteriori-accuracy}, let $\hat{c}_{1-\delta}(\bar{A}_T)$ and $\hat{c}_1(\bar{A}_T)$ be lower bound estimates defined in Theorem \ref{thm:a-posteriori-accuracy}. Then for any $\delta \in (0,1)$, we provide the following a priori guarantee with probability $1-2\delta$:
\[
c_* \geq \hat{c}_{1-\delta}(\bar{A}_T) 
\geq c_*  -  C_f t^{-m} - C_d n t^{d} \prn*{\frac{20R\log(2/\delta)}{\sqrt{T}} + \frac{2(\sqrt{d+1}\|f\|_{C^{d+1}(\X)} + R)\sqrt{2\log(2/\delta)}}{\sqrt{K}}},  
\]
with $C_d = 8^d$. At the same time, given any point $x$ and parameters $A$ for the PSD model, we guarantee a posteriori that $f(x) \geq c_* \geq \hat{c}_{1}(A)$ and $f(x) \geq c_* \geq \hat{c}_{1-\delta}(A)$ with probability $1-\delta$.
\end{corollary}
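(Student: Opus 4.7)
The plan is essentially a bookkeeping combination of the three error sources already controlled earlier, together with the closed-form bound on $\sum_{k} \nrm{M\upk}_{\rm Frob.}$ for the specific feature map $\phi_t$. I will decompose
\[
c_* - \hat{c}_{1-\delta}(\bar{A}_T) = \underbrace{(c_* - \bar{c})}_{\text{approximation}} + \underbrace{(\bar{c} - \E_{k\sim\pi}[L_k(\bar{A}_T)])}_{\text{optimization}} + \underbrace{(\E_{k\sim\pi}[L_k(\bar{A}_T)] - \hat{c}_{1-\delta}(\bar{A}_T))}_{\text{estimation}},
\]
bound each term by the corresponding previously proved result, and take a union bound over the two random events.

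First, the approximation term is handled directly by Corollary \ref{cor:guarantees-appr} (which is the packaging of Theorem \ref{thm:tight-problem} and Theorem \ref{thm:approximation-error}), giving $0 \le c_* - \bar{c} \le C_f t^{-m}$ deterministically. Second, Theorem \ref{thm:optimization-error} controls the optimization gap with probability $1-\delta$ by
\[
\bar{c} - \E_{k\sim\pi}[L_k(\bar{A}_T)] \;\le\; \frac{20 R\log(2/\delta)\bigl(1 + \sum_{k\in\Z^d}\nrm{M\upk}_{\rm Frob.}\bigr)}{\sqrt{T}}.
\]
Third, Theorem \ref{thm:a-posteriori-accuracy} (the high-probability half) yields, with probability $1-\delta$,
\[
\E_{k\sim\pi}[L_k(\bar{A}_T)] - \hat{c}_{1-\delta}(\bar{A}_T) \;\le\; 2\,\text{Err}_{1-\delta} \;=\; 2\bigl(\sqrt{d+1}\nrm{f}_{C^{d+1}(\X)} + R\bigr)\sqrt{\tfrac{2\log(2/\delta)}{K}}\bigl(1 + \sum_{k\in\Z^d}\nrm{M\upk}_{\rm Frob.}\bigr),
\]
using $\nrm{\bar{A}_T}_{\rm Frob.}\le R$ from Theorem \ref{thm:optimization-error}. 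A union bound makes both of these hold simultaneously with probability at least $1-2\delta$.

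To collapse the common factor $1 + \sum_k \nrm{M\upk}_{\rm Frob.}$ into the stated constant $C_d n t^d$ with $C_d = 8^d$, I invoke the bound proved in Appendix \ref{app:bound-our-Mk}, namely $\sum_{k\in\Z^d}\nrm{M\upk}_{\rm Frob.} \le n (8t)^d$, so $1 + \sum_k \nrm{M\upk}_{\rm Frob.} \le 2n(8t)^d$; absorbing the factor of $2$ into the constants in front of the $\sqrt{T}$ and $\sqrt{K}$ terms gives exactly the displayed expression (with the stated form $C_d n t^d$). Combining the three pieces yields the a priori statement.

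For the a posteriori part there is nothing to do beyond noting two trivial facts: for every $x\in\X$, $f(x) \ge \inf_{y\in\X} f(y) = c_*$, and for every $A\succeq 0$, Theorem \ref{thm:a-posteriori-accuracy} already states $c_* \ge \hat{c}_1(A)$ deterministically and $c_* \ge \hat{c}_{1-\delta}(A)$ with probability at least $1-\delta$; chaining them gives $f(x)\ge c_*\ge \hat{c}_1(A)$ and $f(x)\ge c_*\ge \hat{c}_{1-\delta}(A)$ under the stated probability. The only genuinely non-trivial step in the whole argument is the $\sum_k \nrm{M\upk}_{\rm Frob.} \le n(8t)^d$ bound, but that is established separately in the appendix and only imported here; the rest is straightforward concatenation and a union bound.
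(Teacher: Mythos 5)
Your proposal is correct and follows essentially the same route as the paper, which simply combines Theorem \ref{thm:approximation-error} (via Corollary \ref{cor:guarantees-appr}), Theorem \ref{thm:optimization-error}, and Theorem \ref{thm:a-posteriori-accuracy} with a union bound over the two probability-$\delta$ events and the Appendix~\ref{app:bound-our-Mk} bound $\sum_k \nrm{M\upk}_{\rm Frob.} \le n(8t)^d$. Your explicit three-term decomposition and the remark about absorbing the harmless $1+\sum_k\nrm{M\upk}_{\rm Frob.} \le 2n(8t)^d$ factor into the constants are just a more careful writing of the same argument.
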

The Corollary follows immediately by combining Theorems \ref{thm:approximation-error}, \ref{thm:optimization-error}, and \ref{thm:a-posteriori-accuracy}. Since $t = O(n^{1/d})$, by choosing $T = K = O(n^{4 + 2m/d})$, we have
$$ c_* \geq \hat{c}_{1-\delta}(\bar{A}_T) 
\geq c_* - C' n^{-m/d},$$
when $n$ is the dimension of the matrix $\bar{A}_T$. In this case the algorithm has a complexity that is $O(Tn^3 + K n^2) = O(n^{7 + 2m/d})$. In particular, for the class of $(m+d/2+2)$-times differentiable functions with $m > d/2$, we achieve a bound $c_* \geq \hat{c}_{1-\delta}(\bar{A}_T) 
\geq c_* - C' n^{-1}$, with a computational cost of $O(n^{8})$. There is a lot of room for improvement in the constants of the exponents, but the considered algorithm shows that it is possible to obtain the global optimum of a function with both a posteriori guarantees and an a priori error rate that is adaptive to the degree of differentiability of the function to minimize and that avoids the curse of dimensionality for very smooth functions.


\section{Empirical Evaluation}\label{subsec:empirical}

Finally, we apply our method to two simple non-convex optimization problems in one and two dimensions. The results are summarized in Figures \ref{fig:empirical-results1d} and \ref{fig:empirical-results2d}, and all of the details of the experiments are deferred to Appendix \ref{app:experimental-details}, in which we describe a new feature map $\phi$, and describe a more practical algorithm for solving \eqref{eq:specific-sdp} based on reparametrizing $A = UU^\ast$ \citep{burer2003nonlinear}.

\section{Discussion}

\paragraph{Convex duality.} Following~\citet{rudi2020finding}, we can provide a dual interpretation to the use of PSD models. Indeed, the minimimization problem we solve is
  $$
  \inf_{ \mu\  \rm{ probability } \ {\rm  measure} } \int_{\mc{X}}  f(x)  d\mu(x)
  $$
  which we reformulate as
   $$
  \inf_{ \mu \ {\rm signed}\  {\rm measure} } \int_{\mc{X}}  f(x)  d\mu(x)
  \mbox{ such that }    \int_{\mc{X}} d\mu(x) = 1 \mbox{ and }    \int_{\mc{X}} \Phi(x) \Phi(x)^*   d\mu(x) \succcurlyeq 0.
  $$
  Given that we expect the solution of the original problem to Dirac measures supported at global minimizers, we can add constraint that are satisfied by Diracs, such as, 
  $$
  \int_{\mc{X}}|d\mu(x)| \leqslant 1 \mbox{ or }   \Omega(\mu)  \leqslant 1,
  $$
  for any norm $\Omega$ on signed measure that is larger than the total variation norm.
  The first constraint leads to a dual problem
 $$
  \sup_{c \in \rb, \ B \succcurlyeq 0}  c - \big\| f - c 1 - \phi(\cdot)^\top B \phi(\cdot) \big\|_\infty,
  $$
  while the second one leads to
    $$
  \sup_{c \in \rb, \ B \succcurlyeq 0}  c - \Omega^\ast\big( f - c 1 - \phi(\cdot)^\top B \phi(\cdot) \big).
  $$
  It turns out that the dual of the $S$ norm and of $F$ norm are domininating the total variation norm, and thus have a dual interpretation. Thus, our method for obtaining a posteriori certificates directly extends to optimization problems that are defined through probabilty measures and already tackled by kernel sum-of-squares, such as optimal transport \citep{vacher2021dimension}, or optimal control \citep{berthier2021infinite}.

  \paragraph{Comparison to previous work on kernel sums-of-square.}
Compared to \citet{rudi2020finding}, the subsampling is now done differently than before: the constraint $\int_{\mc{X}}  \phi(x) \phi(x)^* d\mu(x) \succcurlyeq 0$ is replaced by the projection on the span of $\phi(x^{(1)},\dots,\phi(x^{(n)})$ being a positive semidefinite matrix. This is a relaxation in the dual, which still leads to a lower bound for the optimization problem.
  
  This also suggests a candidate optimal solution when applied to the torus. Indeed, at optimality, we expect $\mu$ to be close to a Dirac at $x_0$, and then (in 1D for simplicity), $\hat{\mu}_1$ should be close to $e^{-2i\pi x_0}$, and we can read off a candidate minimizer as the argument of the first Fourier coefficients (we could imagine using more than one).



\begin{figure}
     \centering
     \begin{minipage}{0.49\textwidth}
         \centering
         \includegraphics[width=\textwidth]{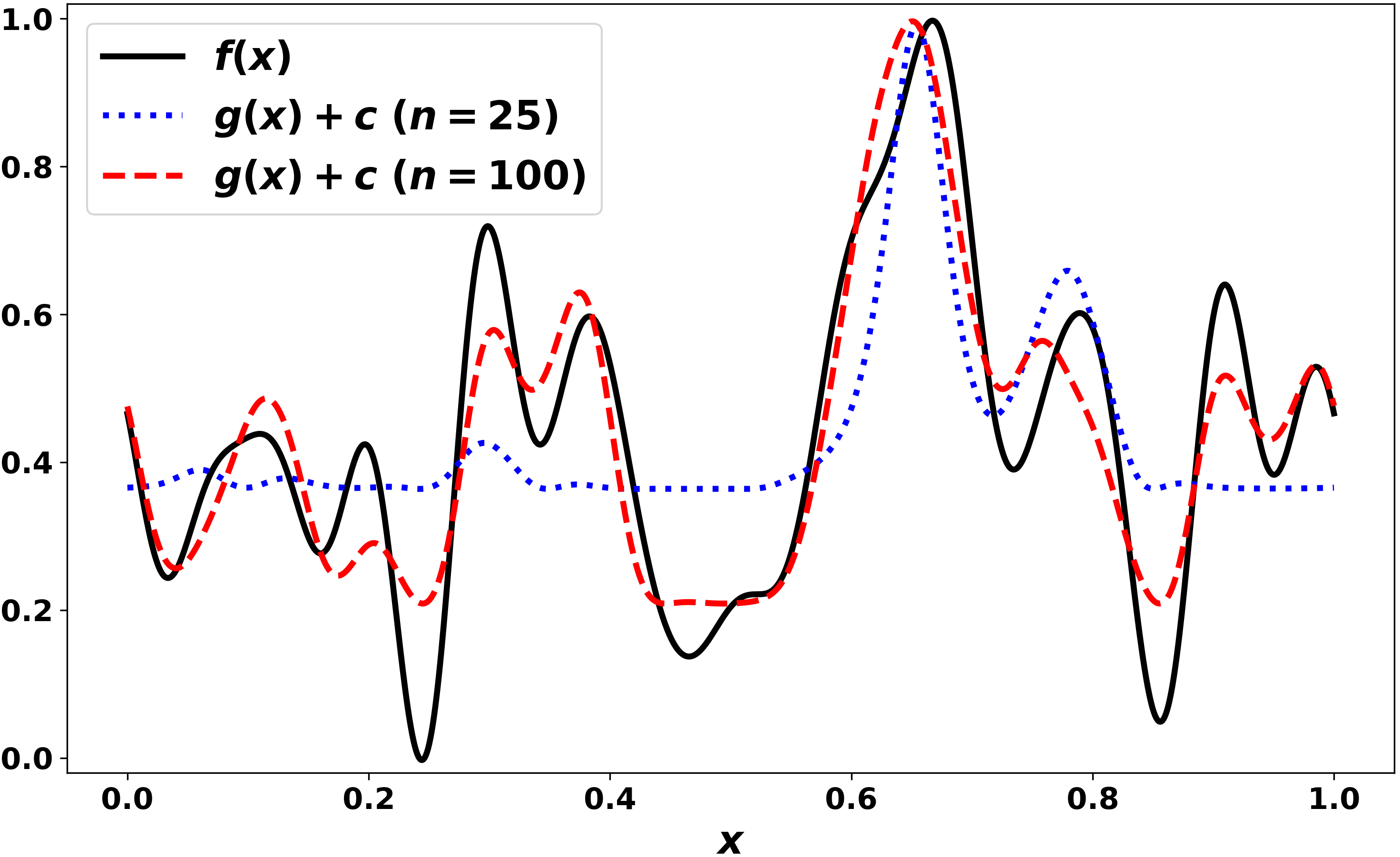}
     \end{minipage}
     \hfill
     \begin{minipage}{0.49\textwidth}
         \centering
         \includegraphics[width=\textwidth]{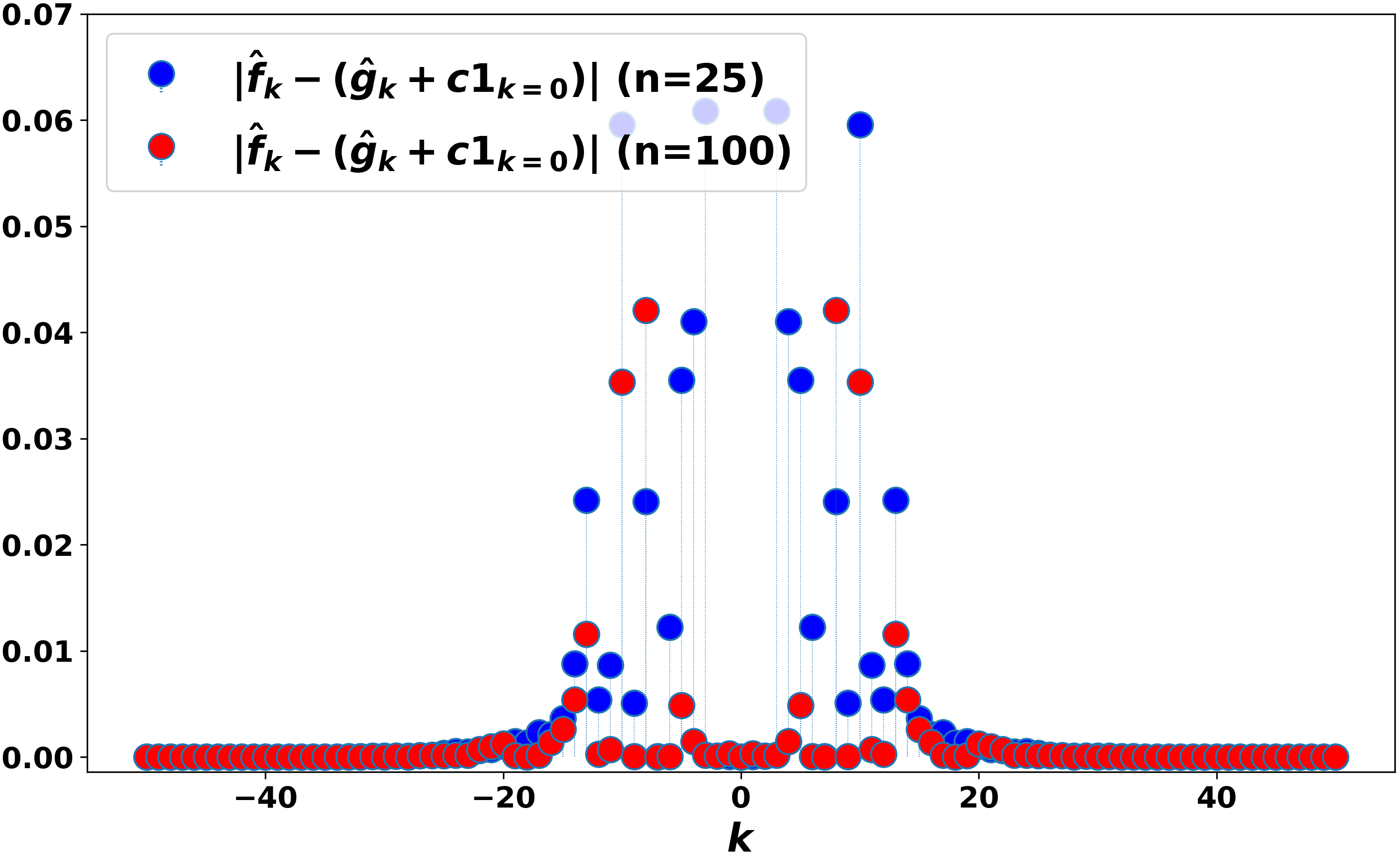}
     \end{minipage}
        {\vspace{-2.5mm}\small\caption{For $f:\R\to\R$ as described in Appendix \ref{app:experimental-details}, we use PSD models using feature maps of dimension $n=25$ and $n=100$. To the left, we see that for $n=25$, the model $g(x)+c$ does not approximate $f$ well, so our a posteriori error guarantee is $0.24$. But, when $n=100$, the model approximates $f$ much better, and our a posteriori guarantee is $0.01$. To the right, we plot the absolute difference between $\hat{f}_k$ and $\widehat{(g+c)}_k$ for small $k$'s, which is what drives the difference in performance between $n=25$ and $n=100$.
        \label{fig:empirical-results1d}}\vspace{-1mm}}
\end{figure}

\begin{figure}
\centering
\includegraphics[width=0.6\textwidth]{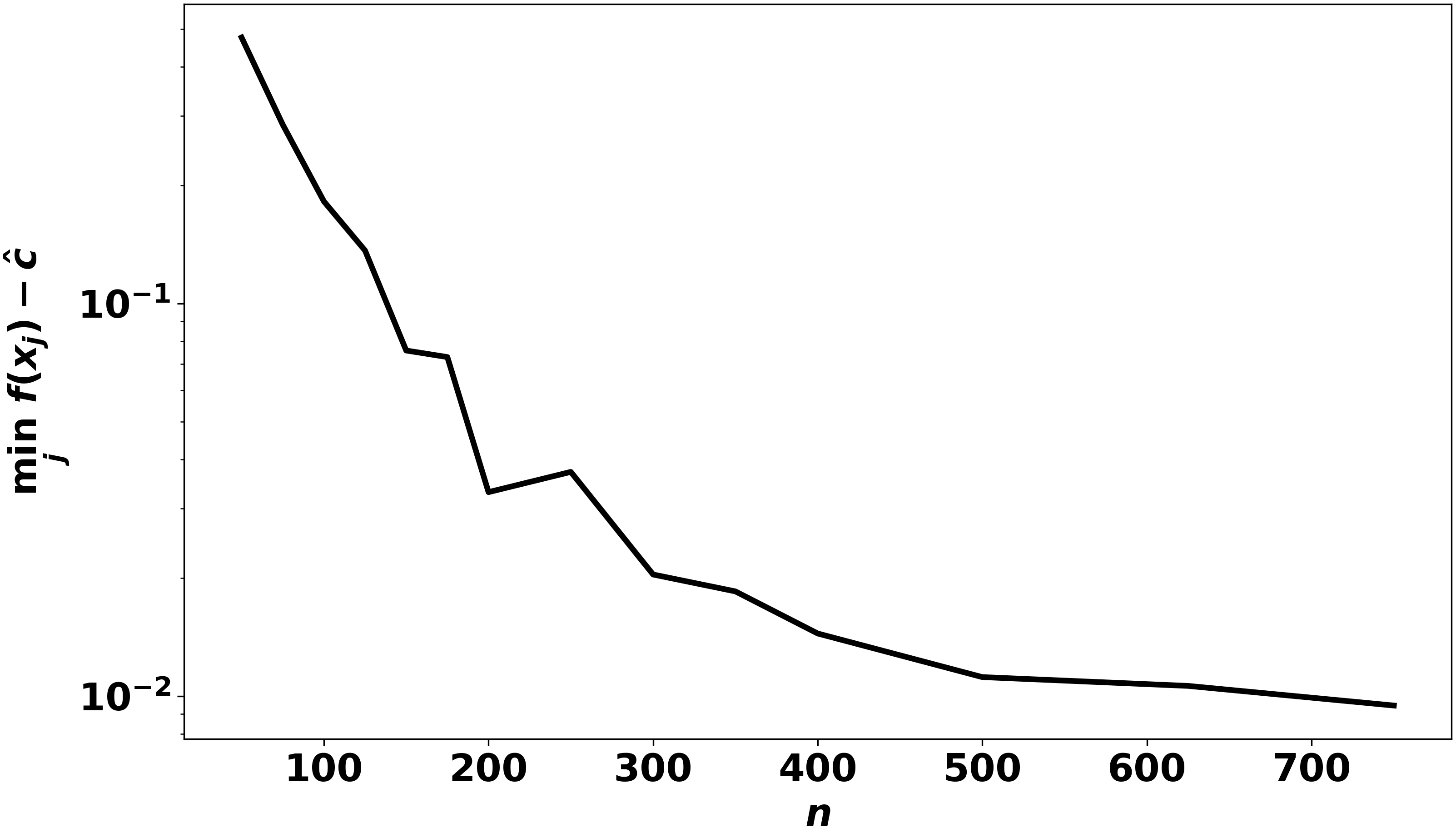}
{\vspace{-3mm}\small\caption{For $f:\R^2\to\R$ as described in Appendix \ref{app:experimental-details}, we plot the a posteriori error guarantee of our algorithm's estimate vs.~$n$, i.e~$\min_j f(x_j) - \hat{c}$ which is the difference between the minimum value of $f$ achieved on a random grid of points, which upper bounds $c_*$, and our estimate of the minimum, $\hat{c}_1$, which lower bounds $c_*$. The error is at most 10\% of the function's range with $n=150$, and can be made less than 1\% with $n=750$.\label{fig:empirical-results2d}}\vspace{-2mm}}
\end{figure}

\paragraph{Acknowledgements}
This work was supported by the French government under the management of the Agence Nationale de la Recherche as part of the “Investissements d’avenir” program, reference ANR-19-P3IA-0001 (PRAIRIE 3IA Institute). We also acknowledge support from the European Research Council (grants SEQUOIA 724063 and REAL 947908).



\bibliography{bibliography}

\newpage

\appendix

\section{Proof of Lemma~\ref{lm:cor2}}\label{app:proof-of-lemma-7}

\begin{proof}
The proof of the adaptation of Corollary 2 of \cite{rudi2020finding} to the periodic setting is organized in four steps that are summarized in this paragraph. By translation $\tilde{f}(x) = f(x-b)$ for a suitable vector $b \in \mc{X}$, we show that all the global minima are contained in a closed set $\mc{A}$ strictly contained in an open set $\Omega$ strictly contained in the closed set $[0,1]^d$. Then Corollary 2 of \cite{rudi2020finding} shows that there exist $q$ functions $u_1,\dots u_q \in C^{m+2}(\R^d)$ such that $\tilde{f}(x) = \sum_{j=1}^q u_j(x)^2$ for any $x \in \Omega$. Then we build two bump functions such that $0 \leq \eta, \nu \leq 1$ and $\eta^2 + \nu^2 = 1$ on $\R^d$, moreover $\eta = 0$ on $\mc{X}\setminus \Omega$ and $1$ on $\mc{A}$ (so $\nu = 1$ on $\mc{X}\setminus \Omega$ and $0$ on $\mc{A}$). Then we create a new function $\tilde{v} = \sqrt{\tilde{f}} \cdot \nu$ and we show that its periodic extension $\tilde{v}_{\textrm{per}}$ satisfies $\tilde{v}_{\textrm{per}} \in C^{m+2}(\R^d)$. We prove the same for $\tilde{u}_{j,\textrm{per}}$, the periodic extension of $\tilde{u}_{j} = u_j \cdot \nu$. The result is obtained by noting that $\tilde{u}_{1,\textrm{per}}, \dots, \tilde{u}_{q,\textrm{per}}, \tilde{v}_{\textrm{per}}$ are $m$-times differentiable periodic functions that satisfy $f(x) = \tilde{v}_{\textrm{per}}(x+b)^2 + \sum_{j} \tilde{u}_{j,\textrm{per}}(x+b)^2$ for any $x \in \R^d$.

\paragraph{Step 1. Translating the functions, applying Corollary 2 of \cite{rudi2020finding}.}
Let $s \in \N$ and $M = \{x_1,\dots,x_s\}$ be the set of minimizers on $[0,1)^d$. First, we work with a translated periodic function $\tilde{f} = f(x-b)$, where $b = (\tau/2, \dots, \tau/2) \in \R^d$
and $\tau$ is the minimum distance of a point in $M$ from $[0,1]^d \setminus [0,1)^d$ (note that $\tau > 0$ by construction). Let $\mc{A} = [2\tau/3,1-2\tau/3]^d$ and $\Omega = \cup_{x \in (\tau/2,1-\tau/2)^d} B_{\tau/6}(x)$, where $B_{\tau/6}(x)$ is the open ball of radius $\tau/6$ centered in $x$. Note that $\mc{A} \subset \Omega \subset \mc{X}$ and that $\mc{A}, \mc{X}$ are closed, while $\Omega$ is open.
Now in the translated version $\tilde{f}$ all the zeros are in $\mc{A}$. By applying Corollary 2 of \cite{rudi2020finding} on $\tilde{f}$ and $\Omega$, we obtain that there exists $q \in \N$ and  $u_1,\dots, u_q \in C^{m}(\R^d)$ such that
$$\tilde{f}(x) = \sum_{j=1}^q u_j(x)^2, \quad \forall x \in \Omega.$$

\paragraph{Step 2. Building the bump functions.}
Let now $\alpha, \beta$ be two infinitely differentiable non-negative functions on $\R^d$ such that
$\alpha = 0$ on $\R^d \setminus \Omega$ and is strictly positive on $\Omega$, while $\beta = 0$ on $\mc{A}$, strictly positive on $\R^d \setminus \mc{A}$. Since $\alpha^2 + \beta^2 > 0$ on $\R^d$, the function $\sqrt{\cdot} \in C^\infty((0,\infty))$ and $\alpha, \beta \in C^\infty(\R^d)$, then $\eta = \alpha/\sqrt{\alpha^2 + \beta^2}$ and $\nu = \beta/\sqrt{\alpha^2 + \beta^2}$ are $C^\infty(\R^d)$ and satisfy $\eta = 0$ on $\R^d \setminus \Omega$, $\eta = 1$ on $\mc{A}$, analogously $\nu = 1$ on $\R^d \setminus \Omega$ and $0$ on $\mc{A}$, moreover $\eta^2 + \nu^2 = 1$ on $\R^d$.

\paragraph{Step 3. Construction of $\tilde{v}_{\textrm{per}}$ and $\tilde{u}_{j,\textrm{per}}$.}
Let $C = [\tau,1-\tau]^d$. By construction, $\{x_1,\dots,x_s\} \subset C$ and so $\tilde{f} > 0$ on the set $\mc{X} \setminus C$. Then $(\tilde{f})^{1/2} \in C^{m+2}(\mc{X}\setminus C)$ since it is the composition of $\sqrt{\cdot} \in C^\infty((0,\infty))$ and $\tilde{f}$ that is $\tilde{f} > 0$ on $\mc{X}\setminus C$.
Then $\tilde{v} = (\tilde{f})^{1/2} \cdot \nu \in C^{m+2}(\mc{X})$, since $(\tilde{f})^{1/2}$ is $m+2$ times differentiable on the set $\mc{X} \setminus C$ and $\nu$ is infinitely differentiable and $0$ on $\mc{A} \supset\supset C$.
Denote by $\tilde{v}_{\textrm{per}}$ the periodic extension of $\tilde{v}$, i.e. $\tilde{v}_{\textrm{per}}(x+k) = \tilde{v}(x)$ for any $x \in \mc{X}$ and $k \in \Z^d$. 

We now prove that $\tilde{v}_{\textrm{per}}$ satisfies $\tilde{v}_{\textrm{per}} \in C^{m+2}(\R^d)$. First note that  it is $m+2$ times differentiable in the interior of each cube $\mc{X} + k$ for $k \in Z^d$, since $\tilde{v}$ has this property on the interior of $\mc{X}$. Moreover it has the same property also in a neighbourhood of the set $S + k$ with $S = [0,1]^d \setminus (0,1]^d$ and $k \in \Z^d$. Indeed, let $B_{\tau/6}(x+k)$ be the open ball of radius $\tau/6$ around $x+k$, with $x \in S$ and $k \in \Z^d$. On $B_{\tau/6}(x+k)$ the function $\tilde{v}$ is equal to $(\tilde{f})^{1/2}$, which in that region is $m+2$ times differentiable, since we are in a translation of $\mc{X} \setminus \Omega$.

Define now $\tilde{u}_j = u_j \cdot \eta$ and denote by $\tilde{u}_{j,\textrm{per}}$ its periodic extension. Similary to the case of $\tilde{v}_{\textrm{per}}$, since $\tilde{u}_j$ is identically $0$ on $\mc{X} \setminus \Omega$ and it is $m$-times differentiable on $\mc{X}$, we can prove that the periodic extension of $\tilde{u}_j$ satisfies $\tilde{u}_{j,\textrm{per}} \in C^{m}(\R^d)$.

\paragraph{Step 4. Conclusion.}
Note that $\R^d = \cup_{k \in \Z^d} \big\{ \mc{X} + k \big\}$. Then, by expanding the definitions, we have that for all $x \in \mc{X}, k \in \Z^d$,
\begin{align*}
\tilde{v}_{\textrm{per}}(x + k)^2 & + \sum_{j=1}^q \tilde{u}_{j,\textrm{per}}(x+k)^2  =  \tilde{v}_{j}(x)^2 + \sum_{j=1}^q \tilde{u}_{j}(x)^2 \\
&=  ((\tilde{f})^{1/2}(x)\nu(x))^2  + \sum_{j=1}^q (u_j(x) \eta(x))^2 =  \tilde{f}(x)\nu(x)^2  + \eta(x)^2\sum_{j=1}^q u_j(x)^2 \\
& = \begin{cases}
\tilde{f}(x)  & x \in \mc{X} \setminus \Omega\\
\tilde{f}(x)\nu(x)^2  + \eta(x)^2\sum_{j=1}^q u_j(x)^2 & x \in \Omega \setminus \mc{A}\\
\sum_{j=1}^q u_j(x)^2 & x \in  \mc{A}
\end{cases}\\
& = \tilde{f}(x),
\end{align*}
where in the last two steps we use the fact that $\sum_j u_j(x)^2 = \tilde{f}(x)$ on $x \in \Omega$ and the fact that $\eta^2 + \nu^2 = 1$ everywhere and, in particular, $\eta = 0$ on $\mc{X} \setminus \Omega$, and on $\mc{A}$, while $\nu = 1$ on $\mc{X} \setminus \Omega$ and $0$ on $\mc{A}$.
The proof is concluded by taking the translated version of by the vector $b$. I.e. $z_j(x) = \tilde{u}_{j,\textrm{per}}(x+b)$ for any $x \in \R^d$ and $j = 1,\dots, q$ and moreover $z_{q+1}(x) = \tilde{v}_{\textrm{per}}(x+b)$ for any $x \in \R^d$, and $Q = q+1$.
\end{proof}

\section{Proof of \cref{thm:approximation-error}}\label{app:thm-approx}

\begin{proof}
First, let $s = m+d/2$. Note that, by applying Lemma \ref{lm:cor2} to $f$, we have that there exist $Q \in \N$ functions  $u_1, \dots, u_Q$, that are periodic, $s$-times differentiable and which provide a new characterization of $f$ as $f = \sum_{j=1}^Q u_j^2$. The desired result is obtained by applying \cref{thm:bound-psd-model} to this characterization of $f$. To apply \cref{thm:bound-psd-model}, we need to find a suitable $\rho \in \ell_1(\Z^d)$ such that $H_\rho$ contains $u_1,\dots, u_Q$. In particular, we choose $\rho_k = (1 + \sum_{j=1}^d (2\pi k_j)^s)^{-2}$.
We prove now that $u_j \in H_\rho$ and we characterize the resulting convergence rate.

Denote $i = \sqrt{-1}$ and by $\partial^q_j u$ the function $\partial^q_j u = \frac{\partial^q}{\partial x_j^q} u$, for all $j \in \{1,\dots,d\}$ and $q \in \{0,\dots, s\}$, and $u$ an $s$-times differentiable periodic function.
Denote by $\|u\|_{C^s(\mc{X})}$ the following norm $\|u\|_{C^s(\mc{X})} = \max_{1 \leq j \leq d} \max_{0\leq q \leq s} \|\partial^q_j u\|_{L^\infty(\mc{X})}$.
With the notation we are using of the Fourier series we have $(\widehat{ \partial^q_j  u})_k = (2 \pi i k_j)^q \widehat{u}_k$ \citep[see, e.g., Prop.~3.1.2 of][]{grafakos2008classical}. Now, by the Plancherel's identity \citep[Prop 3.1.16 of][]{grafakos2008classical} and the fact that $\mc{X}$ has volume equal to $1$,
$$
\sum_{k \in \Z^d} (2 \pi k_j)^{2q} |\widehat{u}_k|^2 = \sum_{k \in \Z^d} |(2 i \pi k_j)^q \widehat{u}_k|^2  = \int_{\mc{X}} |\partial^q_j u|^2 dx \leq \|\partial^q_j u\|^2_{L^\infty(\mc{X})} \leq \|u\|^2_{C^s(\mc{X})}  < \infty,
$$
where the norm $\|u\|_{C^s(\mc{X})}$ is finite, since, for any $u$ that is periodic and $s$-times differentiable, we have that $\frac{\partial^q}{\partial x_j^q} u$, with $q \leq s$, is also continuous and periodic, so uniformly bounded on $\mc{X}$. Now, since $(\sum_{j=0}^d a_j)^2 \leq c_d \sum_{j=0}^d a_j^2$ for any $a_j \geq 0$, with $c = d+1$, by expanding the definition of $\rho_k$, 
\begin{equation}\label{eq:bound-L2-Fourier}
\sum_{k \in \Z^d} \frac{|\widehat{u}_k|^2}{\rho_k} = \sum_{k \in \Z^d} \Big|\widehat{u}_k + \sum_{j=1}^d (2\pi k_j)^s\widehat{u}_k\Big|^2 \leq c\Big(\sum_{k \in \Z^d} |\widehat{u}_k|^2 + \sum_{j=1}^d \sum_{k \in \Z^d} |(2 i \pi k_j)^s\widehat{u}|^2\Big) \leq c\|u\|^2_{C^s(\mc{X})}.
\end{equation}
This proves that the functions $u_1,\dots, u_Q$, that are periodic and $s$-times differentiable, belong to the space $H_\rho$. Now we can apply \cref{thm:bound-psd-model}, which gives $\min_{g \in {\cal G}_t} \|f - g\|_F \leq C'_f R_t$, for any $t \in \N$, where now $C'^2_f = c\sum_{j=1}^Q \|u_j\|_F\|u_j\|_{C^m(\mc{X})}$ and $R_t$ is bound as follows. 

Since, $(\sum_{j=0}^d a_j^s)^2 \geq c_s (\sum_{j=0}^d a_j)^{2s}$ for any $a_j \geq 0$, with $c_s = (d+1)^{-2(s-1)}$, \citep[see, e.g., page 11 of][]{grafakos2008classical}, and the cardinality of the set of vectors in $\Z^d$ summing up to a given number corresponds to $\#\big\{k~\big|~|k| = r\big\} = \binom{r+d-1}{d-1} \leq C_d r^{d-1}$ for any $r,d \in \N$ \cite[e.g., page 52 of][]{brualdi2009introductory}, with $C_d = (2e)^{k-1}$, we have
\begin{align*}
R^2_t = \sum_{|k| > t} \rho_k & \leq \sum_{|k| > t} \frac{1}{1 + c_s |k|^{2s}} = \sum_{r>t} \frac{\#\big\{k~\big|~|k| = r\big\}}{1 + c_s r^{2s}} \leq \sum_{r > t}  \frac{C_d r^{d-1}}{1 + c_s r^{2s}} \leq \frac{C_d}{(s-d)\,c_s} t^{-(2s-d)},
\end{align*}
where, in the last step, we used the fact that $\frac{C_d r^{d-1}}{1 + c_s r^{2s}} \leq \frac{C_d}{c_s} r^{-(2s-d+1)}$, moreover, $\sum_{r > t} r^{-(2s - d+1)} \leq \int_t^\infty x^{-(2s-d+1)} dx = \frac{t^{-(2s-d)}}{2s-d}$. The final constant is then $C^2_f = C'^2_f C_d/((s-d) c_s)$.
\end{proof}

\section{Proof of Theorem \ref{thm:optimization-error}}\label{app:proof-of-thm-optimization-error}
\begin{proof}
When $L_k$ is concave and $G$-Lipschitz w.r.t.~the Frobenius norm for all $k$, then the average of the iterates of projected stochastic gradient ascent with optimal constant stepsize applied to a problem of the form in \cref{eq:stoch-opt-problem} will have error bounded by \citep[][Proposition 2.2]{nemirovski2009robust}
\begin{equation}\label{eq:general-sgd-bound-convergence-proof}
\bar{c} - \E_{k\sim\pi}[L_k(\bar{A}_T)] \leq a_0\cdot\frac{GR\log(2/\delta)}{\sqrt{T}}
\end{equation}
with probability at least $1-\delta$.
For our particular objective, it is easy to see that for all $k \in \mathbb{Z}^d$, 
\[
\sup_{A,A'}\frac{\abs{L_k(A) - L_k(A')}}{\nrm{A - A'}_{Frob.}} \leq \frac{\nrm{M\upk}_{Frob.}}{\pi_k}.
\]
Therefore, with our choice of $\pi_k \propto \nrm{M\upk}_{Frob.} + (1+\sum_{j=1}^d (2 \pi k_j)^{d+1})^{-1}$, we can bound the parameter of Lipschitz continuity for all $k$ by 
\[
G 
\leq \sum_{k\in\mathbb{Z}^d} \brk*{\nrm{M\upk}_{Frob.} + \prn*{1+\sum_{j=1}^d (2 \pi k_j)^{d+1}}^{-1}}
\leq 1 + \sum_{k\in\mathbb{Z}^d} \nrm{M\upk}_{Frob.}
\]
Plugging this into \cref{eq:general-sgd-bound-convergence-proof} completes the proof.
\end{proof}

\section{Proof of Theorem \ref{thm:a-posteriori-accuracy}}\label{app:proof-of-thm-a-posteriori-accuracy}

\begin{proof}
First, we prove the high-probability bound. We begin by arguing that $L_k(A)$ is bounded. Let $\mu_k = (1+\sum_{j=1}^d (2 \pi k_j)^{d+1})^{-1}$  so that $\pi_k \propto \nrm{M\upk}_{Frob.} + \mu_k$. For each $k \neq 0$, we have
\begin{align*}
\abs{L_k(A)} 
&= \frac{1}{\pi_k}\abs*{\hat{f}_k - \inner{A}{M\upk}} \\
&\leq \frac{\sum_{k'\in\mathbb{Z}^d}\brk*{\nrm{M^{(k')}}_{Frob.} + \mu_{k'}}}{\nrm{M\upk}_{Frob.} + \mu_k}\prn*{\abs{\hat{f}_k} + \nrm{A}_{Frob.}\nrm{M\upk}_{Frob.}} \\
&\leq \prn*{1 + \sum_{k'\in\mathbb{Z}^d}\nrm{M^{(k')}}_{Frob.}}\prn*{\frac{\abs{\hat{f}_k}}{\mu_k} + \nrm{A}_{Frob.}}.
\end{align*}
Now we need to bound $\abs{\hat{f}_k}/\mu_k$. Note that, by applying \cref{eq:bound-L2-Fourier}, with $u = f$, $s = d+1$ and $\rho_k = \mu^2_k$, we have that for any $k \in \Z^d$,
$$
\frac{|f_k|}{\mu_k} \leq \left(\sum_{k \in \Z^d} \frac{|f_k|^2}{\mu^2_k}\right)^{1/2} \leq \sqrt{d+1}\|f\|_{C^{d+1}(\X)},
$$
where $\|f\|_{C^{d+1}(\X)} = \max_{j=1,\dots,d} \max_{q=1,\dots,d+1} \|\frac{\partial^q}{\partial x^q_j} f\|_{L^\infty(\X)}$.
The result then follows by Hoeffding's inequality: for any $\delta \in (0,1)$
\begin{multline*}
\P\bigg(\abs*{\E_{k\sim\pi}[L_k(A)] - \frac{1}{K}\sum_{i=1}^K L_{k_i}(A)} \geq\\ (\sqrt{d+1}\|f\|_{C^{d+1}(\X)} + \nrm{A}_{Frob.})\sqrt{\frac{2\log(2/\delta)}{K}}\Big(1 + \sum_{k\in\mathbb{Z}^d}\nrm{M\upk}_{Frob.}\Big)\bigg) \leq \delta.
\end{multline*}
Rearranging and noting that  $\E_{k\sim\pi}[L_k(A)] \leq \bar{c} \leq c_*$, completes the first half of the proof.

For the second set of bounds, we note that
\begin{align*}
\bar{c}
&\geq \E_{k\sim\pi}[L_k(A)] \\
&= \sum_{k:\abs{k}\leq K} \pi_k L_k(A) - \sum_{k:\abs{k} > K} \abs*{\hat{f}_k - \inner{A}{M\upk}} \\
&\geq \sum_{k:\abs{k}\leq K} \pi_k L_k(A) - \sum_{k:\abs{k} > K} \prn*{\abs{\hat{f}_k} +\nrm{A}_{Frob.}\nrm{M\upk}_{Frob.}} \\
\end{align*}
This completes the proof.
\end{proof}

\section{Bound on $\nrm{M\upk}_{Frob.}$ for $\phi_t$}\label{app:bound-our-Mk}
The $k_1,k_2$ entry of $(\phi\phi^\ast)(x)$ is equal to $e_{k_1}(x)e_{k_2}(x)^\ast$, so 
\begin{align*}
[M\upk]_{k_1,k_2} 
&= \int_{[0,1]^d} e_{k_1}(x)e_{k_2}(x)^\ast e^{-2\pi i x^\top k} dx \\
&= \int_{[0,1]^d} e^{-2\pi i x^\top (k + k_1 - k_2)} dx
= \indicator{k = k_2 - k_1}.
\end{align*}
Therefore, the entries of $M\upk$ are bounded by $1$, and if $\abs{k} \geq 2t$, then $M\upk = 0$. Therefore, we can bound $\nrm{M\upk}_{Frob.} \leq n\indicator{\abs{k}\leq 2t}$. Since $\#\{\abs{k}\leq 2t\} \leq (4t+1)^d \leq 8^d t^d$ when $t \geq 1$, then
\[
\sum_{k\in\mathbb{Z}^d} \nrm{M\upk}_{Frob.}  \leq n 8^{2d} t^{2d}  .
\]

\section{Experimental Details}\label{app:experimental-details}

Here, we describe how we applied our approach to two simple non-convex optimization problems to show its promise. As described, we can lower bound $c_*$ by solving the stochastic concave maximization problem \cref{eq:stoch-opt-problem}
using an algorithm like projected stochastic gradient ascent. However, each iteration requires projecting the algorithm's iterate onto the PSD cone, which is computationally expensive. 

Therefore, following \citet{burer2003nonlinear}, we reparametrize $A = UU^\ast$, which is always positive semidefinite, using new parameters $U\in\C^{n\times n}$, yielding the \emph{unconstrained} objective
\begin{equation}\label{eq:factorized-objective}
\bar{c} = \max_{U \in \C^{n\times n}} \E_{k\sim\pi}\brk*{L_k(UU^\ast)}.
\end{equation}
Due to the non-linear reparametrization, the objective is no longer concave, but just as \citet{burer2003nonlinear} exhibit for linear SDPs, we find that stochastic gradient ascent on $U$ succeeds for our problem when we optimize a smooth surrogate for our non-differentiable objective. Specifically, for $k \neq 0$, we replace 
\[
L_k(A) = \frac{-1}{\pi_k}\abs*{\hat{f}_k - \inner{A}{M\upk}}
\to \tilde{L}_k(A) = \frac{-1}{\pi_k}\sqrt{(\alpha\pi_k)^2 + \abs*{ \hat{f}_k - \inner{A}{M\upk}}^2},
\]
where $\alpha$ is small scalar. Choosing $\alpha$ larger makes the objective smoother, but makes $\tilde{L}_k$ a worse approximation of $L_k$. In our experiments, we tune $\alpha$, along with the other hyperparameters---including the stepsize, $\eta$ and the number of iterations, $T$---with cross validation.

\paragraph{A random non-convex objective.}
We constructed a family of non-convex periodic functions on $[0,1]$ and $[0,1]^2$ to test our algorithm. The functions are defined in terms of their Fourier series, with $\hat{f}_k \sim \mc{N}(0,1/(1+\abs{k})^2) + i\cdot\mc{N}(0,1/(1+\abs{k})^2)$ for each $k$ with $\abs{k} \leq 15$ in the 1D case and $\abs{k} \leq 4$ in the 2D case. We then adjust the Fourier components so that they satisfying the necessary property  $\hat{f}_{k^\ast} = \hat{f}_k^\ast$. The value of $f$ itself is then computed on a grid of points, $x_1,\dots,x_N$, and is rescaled by dividing by $\max_i f(x_i) - \min_j f(x_j)$ so that $f$'s range is of order 1. 

\paragraph{A different feature map}
The feature map that we use for our experiments has the form $\phi_{n,\rho}(x) = \tilde{\phi}_{n,\rho}(x[1])\circ\tilde{\phi}_{n,\rho}(x[2])\circ\dots\circ\tilde{\phi}_{n,\rho}(x[d])$, where $n \in \mathbb{N}$ and $\rho\in(0,1)$ are hyperparameters to be chosen later, $\tilde{\phi}_{n,\rho}:\R\to\C^n$, and $\circ$ denotes the hadamard product, so the feature map decomposes over the coordinates of $x$. To define $\tilde{\phi}_{n,\rho}$, we sample $n$ points $x_1,\dots,x_n$ uniformly at random from $[0,1]^d$, and set
\[
\tilde{\phi}_{n,\rho}(x[i])[j] = \varphi_{\rho}(x[i] - x_j[i]),\qquad \varphi_\rho(x) = \sum_{k\in\mathbb{Z}}\rho^{\abs{k}}e^{2\pi i k x}.
\]
The function $\varphi$ is chosen so that its Fourier components $\hat{\varphi}_k = \rho^{\abs{k}}$ decay exponentially quickly with $k$. In Appendix \ref{app:experiment-feature-map} below, we show how to compute the matrices $M\upk$ that are needed to implement our algorithm, and we bound $\nrm{M\upk}_{Frob.}$, which is needed to compute the a posteriori guarantees. For this feature map, larger $n$ allows for a more expressive, but more computationally expensive PSD model and Figures \ref{fig:empirical-results1d} and \ref{fig:empirical-results2d} demonstrate the effect of $n$ on our a posteriori accuracy guarantees in one and two dimensions, respectively. The parameter $\rho$, which we choose using cross-validation, clearly affects the Fourier components of the PSD model that we learn, with smaller $\rho$ making them decay more quickly with $k$.

\subsection{Analysis of the Feature Map}\label{app:experiment-feature-map}

In what follows, we will drop the subscripts $n$ and $\rho$ and consider these hyperparameters to be fixed and arbitrary. We recall the definition 
\[
\tilde{\phi}(x[i])[j] = \varphi(x[i] - x_j[i]),\qquad \varphi(x) = \sum_{k\in\mathbb{Z}}\rho^{\abs{k}}e^{2\pi i k x}.
\]
We further note that 
\begin{align}
\varphi(x) &=  \sum_{k \in \Z} \rho^{|k|} e^{2i\pi k x} 
= -1 + 2 \cdot {\rm Re} \Big( \sum_{k \in \mathbb{N}} \rho^k  e^{2i\pi k x}  \Big)
\nonumber\\
& =  -1 + 2 \cdot {\rm Re} \Big( \frac{1}{1- \rho  e^{2i\pi x}} \Big)
= - 1 + 2 \frac{ 1 - \rho \cos 2\pi x}{ 1 + \rho^2 - 2 \rho \cos 2 \pi x} \nonumber\\
& =  
\frac{1-\rho^2}{1 + \rho^2 - 2 \rho \cos 2 \pi x}.\label{eq:new-kernel-positive}
\end{align}

In this Appendix, we show how to compute $M\upk$, the $k$th Fourier component of $\phi\phi^\ast$, which is needed to implement our algorithm. Since $\phi(x) = \tilde{\phi}(x[1])\circ\dots\circ\tilde{\phi}(x[d])$ decomposes across coordinates, this essentially boils down to computing the 1D version $d$ times and multiplying across dimensions.

So, for now we focus on the case $d=1$, and attempt to compute
\[
[M\upk]_{ij} = [\widehat{\phi_i\phi_j}]_k.
\]
Since $\phi_i(x) = \phi(x - x_i)$ and $\phi_j(x) = \phi(x - x_j)$, we need to know how to compute the $k$-th Fourier coefficient of $x \mapsto \varphi(x-y)\varphi(x - z)$. We have:
\BEAS
\varphi(x-y)\varphi(x - z)
& = & \sum_{n,m \in \Z} \rho^{|n| + |m| } e^{2i\pi n (x-y) + 2i\pi m (x-z) }
\\
& = &  \sum_{n,m \in \Z} \rho^{|n| + |m| } e^{-2i\pi n y -  2i\pi m z }  e^{2i\pi (n+m) x }.
\EEAS

For simplification and by symmetry, we can consider $T(y,x) = \varphi(x-y)\varphi(x)$, so that 
$\phi(x-y)\phi(x - z) = T(y-z,x-z)$.
Thus, this $k$-th Fourier coefficient is simply
\BEAS
\hat{T}(y)_k & = & \sum_{n+m = k}  \rho^{|n| + |m| } e^{-2i\pi n y  }  =    \sum_{n \in \Z}  \rho^{|n| + |n-k| } e^{-2i\pi ny}  .
\EEAS
Moreover, we will need to compute $e^{-2ik\pi z}  \hat{T}(y-z)_k$.
We directly have $\hat{T}(y)_{-k} = \hat{T}(y)_k^\ast$, so we can consider $k \geq 0$ and
\BEAS
\hat{T}(y)_k& = &  
\sum_{n = -\infty }^{0}  \rho^{k - 2n  } e^{-2i\pi n  y}  
+ \sum_{n = 1}^{k-1}  \rho^{k  } e^{-2i\pi n y}  
+\sum_{n = k}^{+\infty}  \rho^{2n - k } e^{-2i\pi n y}  
\\
 & = &  
\sum_{n = 0 }^{+\infty}  \rho^{k + 2n  } e^{2i\pi n y}  
+  \rho^{k  }  \sum_{n = 1}^{k-1} e^{-2i\pi n y}  
+\sum_{n = k}^{+\infty}  \rho^{2n - k } e^{-2i\pi n y}  
\\
 & = &  \begin{cases} \rho^{k}\prn*{\frac{1  }{1- \rho^2 e^{2i\pi y}}
 +   
  \frac{ e^{-2i\pi  y} -   e^{-2i\pi k y}}{1-   e^{-2i\pi y}}
+  \frac{ e^{-2i\pi ky}  }{1- \rho^2 e^{-2i\pi y}}} & y \neq 0 \\
\rho^{k}\prn*{
  k
+  \frac{ 1+\rho^2 }{1- \rho^2 }} & y = 0.
\end{cases}
\EEAS
Therefore,
\BEAS
e^{-2ik\pi z} \hat{T}(y-z)_k& = & \begin{cases} \rho^{k}\prn*{  \frac{e^{-2ik\pi z}  }{1- \rho^2 e^{2i\pi (y-z) }}
+
\frac{ e^{-2i\pi (y + kz)} -   e^{-2i\pi (k y+z)} }{e^{-2i\pi z }-   e^{-2i\pi y }}
+ \frac{ e^{-2i\pi k y}  }{1- \rho^2 e^{-2i\pi (y-z)}}
 } & y \neq z\\
 \rho^{k}e^{-2ik\pi z}\prn*{
  k
+  \frac{1 + \rho^2}{1- \rho^2}}
 & y = z.
\end{cases}
\EEAS
Therefore, we have that the $i,j$th entry of $M\upk$ for $k \geq 0$ is given by
\[
[M\upk]_{ij} = \begin{cases}
\rho^{k}\prn*{  \frac{e^{-2ik\pi x_j}  }{1- \rho^2 e^{2i\pi (x_i-x_j) }}
+
\frac{ e^{-2i\pi (x_i + kx_j)} -   e^{-2i\pi (k x_i+x_j)} }{e^{-2i\pi x_j }-   e^{-2i\pi x_i }}
+ \frac{ e^{-2i\pi k x_i}  }{1- \rho^2 e^{-2i\pi (x_i-x_j)}}
 } & x_i \neq x_j \\
 \rho^{k}e^{-2i\pi x_i}\prn*{k
+  \frac{1 + \rho^2}{1- \rho^2}} & x_i = x_j,
\end{cases}
\]
and for $k < 0$, we have $M^{(k)} = {M^{(-k)}}^\ast$.
This allows us to compute $[M\upk]_{ij}$ in the 1D case, which is the above function of $k$, $x_i$, and $x_j$; denote this function $h(k,x_i,x_j)$. For the multidimensional case, since the feature map decomposes over coordinates, we simply have
\[
[M\upk]_{ij} = \prod_{a=1}^d h(k, x_i[a], x_j[a]).
\]
With this in hand, we can implement our algorithm.


\paragraph{Special cases and bounds.}
Now, we try to control $\nrm{M\upk}_{Frob.}$, which is needed to compute the a posteriori error guarantees.

First, we note that in the special case $k=0$, we get:
$$
[M^{(0)}]_{ij} = 
  \frac{ 1-\rho^4}{1+\rho^4 - 2 \rho^2 \cos 2\pi (x_i-x_j)}
 .
$$
Moreover, we have:
$$
\hat{T}(0)_k = \rho^{|k|} 
\Big[   |k| 
  + 
  \frac{1 + \rho^2 }{1 -  \rho^2  } \Big].
$$
We also have, since $\varphi$ is always non-negative (see \eqref{eq:new-kernel-positive}):
$$
|\hat{T}(y)_k| = \Big| \int_0^1 e^{-2ik\pi x}  \varphi(x)\varphi(x-y) dx  \Big| \leq \int_0^1 \varphi(x)\varphi(x-y) dx  =  \hat{T}(0)_k.
$$
Therefore, in 1D
\[
\abs{[M\upk]_{ij}} = \abs{e^{-2i\pi kx_j}\hat{T}(x_i-x_j)_k} \leq \abs{\hat{T}(x_i-x_j)_k} \leq \hat{T}(0)_k = \rho^{|k|} 
\Big[   |k| 
  + 
  \frac{1 + \rho^2 }{1 -  \rho^2  } \Big]
\]
Therefore, we can upper bound in 1D
\[
\nrm{M\upk}_{Frob.}
= \sqrt{\sum_{i,j=1}^n [M\upk]_{ij}^2} 
\leq n \rho^{|k|} 
\Big[|k| 
  + 
  \frac{1 + \rho^2 }{1 -  \rho^2  } \Big].
\]
In multiple dimensions, we can further bound
\begin{align*}
\nrm{M\upk}_{Frob.}
&\leq n^d \prod_{i=1}^d \rho^{|k_i|} 
\Big[|k_i| 
  + 
  \frac{1 + \rho^2 }{1 -  \rho^2  } \Big]\\ 
&\leq 
n^d\rho^{\abs{k}}\brk*{\frac{\abs{k}}{d} + \frac{1+\rho^2}{1-\rho^2}}^d \\
&= \prn*{n\frac{1-\rho^2}{1+\rho^2}}^d\rho^{\abs{k}}\brk*{1 + \frac{\abs{k}\frac{1-\rho^2}{1+\rho^2}}{d}}^d \\
&\leq \prn*{n\frac{1-\rho^2}{1+\rho^2}}^d\rho^{\abs{k}}e^{\abs{k}\frac{1-\rho^2}{1+\rho^2}} \\
&= \prn*{n\frac{1-\rho^2}{1+\rho^2}}^d\prn*{\rho e^{\frac{1-\rho^2}{1+\rho^2}}}^{\abs{k}} \\
&= \zeta {\tilde{\rho}}^{\abs{k}},
\end{align*}
where $\zeta$ is a constant independent of $k$ and $\tilde{\rho} < 1$. Therefore, $\nrm{M\upk}_{Frob.}$ decays exponentially quickly as $\abs{k}$ increases, which ensures that $\sum_{k\in\mathbb{Z}^d} \nrm{M\upk}_{Frob.}$ is finite and not too large, and that
$\sum_{k:\abs{k} > K} \nrm{M\upk}_{Frob.}$ goes to zero as $K$ increases, which can allow for a tight a posteriori guarantee using Theorem \ref{thm:a-posteriori-accuracy}.

\end{document}